\algnewcommand\algorithmicinput{\textbf{Input:}}
\algnewcommand\algorithmicoutput{\textbf{Output:}}
\algnewcommand\INPUT{\item[\algorithmicinput]}
\algnewcommand\OUTPUT{\item[\algorithmicoutput]}
\theoremstyle{plain}
\newtheorem{theorem}{Theorem}
\newtheorem{prop}{Proposition}
\newtheorem{ass}{Assumption}
\theoremstyle{definition}
\newtheorem{exam}{Example}
\newtheorem{lemma}{Lemma}
\newtheorem{property}{Property}
\newtheorem{remark}{Remark}
\DeclareMathOperator*{\mcr}{MCR}
\newcommand*{\KeepStyleUnderBrace}[1]{
  \mathop{%
    \mathchoice
    {\underbrace{\displaystyle#1}}%
    {\underbrace{\textstyle#1}}%
    {\underbrace{\scriptstyle#1}}%
    {\underbrace{\scriptscriptstyle#1}}%
  }\limits
}
\def\md{\bm{d}}
\def\mp{\bm{p}}
\def\mr{\bm{r}}
\def\my{\bm{y}}
\def\mC{\bm{C}}
\def\mD{\bm{D}}
\def\mI{\bm{I}}
\def\mI{\bm{I}}
\def\mM{\bm{M}}
\def\mR{\bm{R}}
\def\mX{\bm{X}}
\def\mY{\bm{Y}}
\def\mC{\bm C}
\def\mI{\bm I}
\def\mX{\bm X}
\def\mY{\bm Y}
\def\mM{\bm M}
\def\tA{\mathcal{A}}
\def\tB{\mathcal{B}}
\def\tC{\mathcal{C}}
\def\tE{\mathcal{E}}
\def\tI{\mathcal{I}}
\def\tJ{\mathcal{J}}
\def\tM{\mathcal{M}}
\def\tN{\mathcal{N}}
\def\tO{\mathcal{O}}
\def\tP{\mathcal{P}}
\def\tR{\mathcal{R}}
\def\tS{\mathcal{S}}
\def\tY{\mathcal{Y}}
\def\trueT{\Theta_{\text{true}}}
\def\trueM{\mM_{k,\text{true}}}
\def\entry#1{\llbracket #1 \rrbracket}
\def\entry#1{\llbracket #1 \rrbracket}
\newcommand{\vnormSize}[2]{#1\lVert#2#1\rVert_2}
\newcommand{\normSize}[2]{#1\lVert#2#1\rVert}
\newcommand{\FnormSize}[2]{#1\lVert#2#1\rVert_F} 
\newcommand{\mnormSize}[2]{#1\lVert#2#1\rVert_{\max}} 
\DeclareMathOperator*{\argmin}{arg\,min}
\DeclareMathOperator*{\argmax}{arg\,max}
\begin{document}

\begin{center}
\begin{spacing}{1.5}
\textbf{\Large Multiway clustering via tensor block models}
\end{spacing}

\vspace{2mm}
Miaoyan Wang\footnote{To whom correspondence should be addressed: miaoyan.wang@wisc.edu. Miaoyan Wang is Assistant Professor, Department of Statistics, University of Wisconsin-Madison, Madison, WI 53706; Yuchen Zheng is an undergraduate student in Statistics, University of Wisconsin-Madison, Madison, WI 53706.} and Yuchen Zeng

\vspace{1mm}
Department of Statistics, University of Wisconsin-Madison
\end{center}

\vspace{5mm}
\begin{abstract}
We consider the problem of identifying multiway block structure from a large noisy tensor. Such problems arise frequently in applications such as genomics, recommendation system, topic modeling, and sensor network localization. We propose a tensor block model, develop a unified least-square estimation, and obtain the theoretical accuracy guarantees for multiway clustering. The statistical convergence of the estimator is established, and we show that the associated clustering procedure achieves partition consistency. A sparse regularization is further developed for identifying important blocks with elevated means. The proposal handles a broad range of data types, including binary, continuous, and hybrid observations. Through simulation and application to two real datasets, we demonstrate the outperformance of our approach over previous methods. \\

{\bf Keywords:} Tensor block model, Clustering, Least-square estimation, Dimension reduction.

\end{abstract}

\section{Introduction}

Higher-order tensors have recently attracted increased attention in data-intensive fields such as neuroscience~\cite{zhou2013tensor}, social networks~\cite{nickel2011three}, computer vision~\cite{tang2013tensor}, and genomics~\cite{wang2017three,hore2016tensor}. In many applications, the data tensors are often expected to have underlying block structure. One example is multi-tissue expression data~\cite{wang2017three}, in which genome-wide expression profiles are collected from different tissues in a number of individuals. There may be groups of genes similarly expressed in subsets of tissues and individuals; mathematically, this implies an underlying three-way block structure in the data tensor. In a different context, block structure may emerge in a binary-valued tensor. Examples include multilayer network data~\cite{nickel2011three}, with the nodes representing the individuals and the layers representing the multiple types of relations. Here a planted block represents a community of individuals that are highly connected within a class of relationships. 

\begin{figure}[H]

\centering
\includegraphics[width=.8\textwidth]{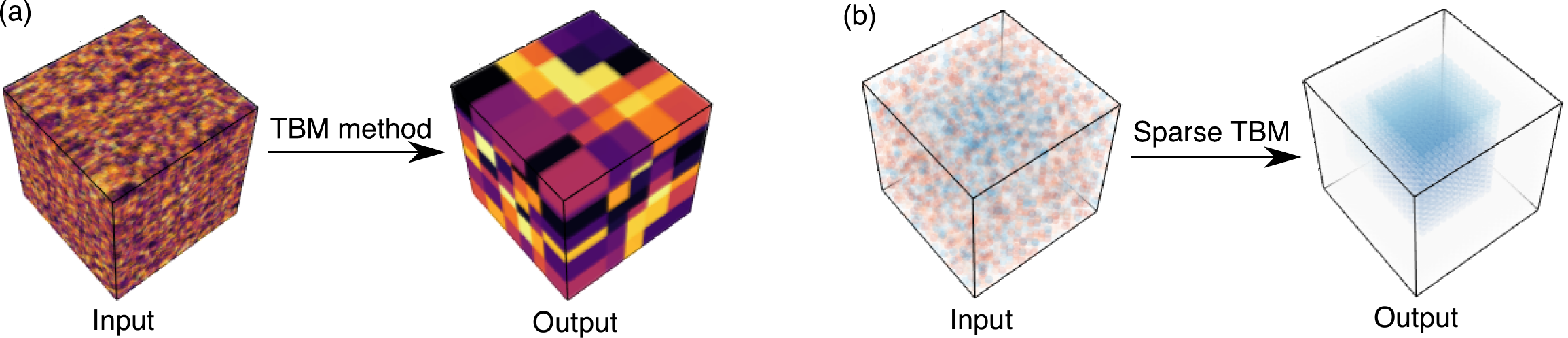}
\caption{\small Examples of tensor block model (TBM). (a) Our TBM method is used for multiway clustering and for revealing the underlying checkerbox structure in a noisy tensor. (b) The sparse TBM method is used for detecting sub-tensors of elevated means. }
\label{fig:1}
\end{figure}

This paper presents a new method and the associated theory for tensors with block structure. We develop a unified least-square estimation procedure for identifying multiway block structure. The proposal applies to a broad range of data types, including binary, continuous, and hybrid observations. We establish a high-probability error bound for the resulting estimator, and show that the procedure enjoys consistency guarantees on the block structure recovery as the dimension of the data tensor grows. Furthermore, we develop a sparse extension of the tensor block model for block selections. Figure~\ref{fig:1} shows two immediate examples of our method. When the data tensor possesses a checkerbox pattern modulo some unknown reordering of entries, our method amounts to multiway clustering that simultaneously clusters each mode of the tensor (Figure~\ref{fig:1}a). When the data tensor has no full checkerbox structure but contains a small numbers of sub-tensors of elevated means, we develop a sparse version of our method to detect these sub-tensors of interest (Figure~\ref{fig:1}b). 

{\bf Related work.} Our work is closely related to, but also clearly distinctive from, the low-rank tensor decomposition. A number of methods have been developed for low-rank tensor estimation, including CANDECOMP/PARAFAC (CP) decomposition~\cite{hitchcock1927expression} and Tucker decomposition~\cite{tucker1966some}. The CP model decomposes a tensor into a sum of rank-1 tensors, whereas Tucker model decomposes a tensor into a core tensor multiplied by orthogonal matrices in each mode. In this paper we investigate an alternative block structure assumption, which has yet to be studied for higher-order tensors. Note that a block structure automatically implies low-rankness. However, as we will show in Section~\ref{sec:theory}, a direct application of low rank estimation to the current setting will result in an inferior estimator. Therefore, a full exploitation of the block structure is necessary; this is the focus of the current paper. 

Our work is also connected to biclustering~\cite{tan2014sparse} and its higher-order extensions~\cite{kolda2008scalable,wang2015multi}. Existing multiway clustering methods~\cite{kolda2008scalable,wang2015multi,hore2016tensor,wang2018learning} typically take a two-step procedure, by first estimating a low-dimension representation of the data tensor and then applying clustering algorithms to the tensor factors. In contrast, our tensor block model takes a single shot to perform estimation and clustering simultaneously. This approach achieves a higher accuracy and an improved interpretability. Moreover, earlier solutions to multiway clustering~\cite{jegelka2009approximation,kolda2008scalable} focus on the algorithm effectiveness, leaving the statistical optimality of the estimators unaddressed. Very recently, Chi et al~\cite{chi2018provable} provides an attempt to study the statistical properties of the tensor block model. We will show that our estimator obtains a faster convergence rate than theirs, and the power is further boosted with a sparse regularity.

\section{Preliminaries}

We begin by reviewing a few basic factors about tensors~\cite{kolda2009tensor}. We use $\tY=\entry{y_{i_1,\ldots,i_K}}\in \mathbb{R}^{d_1\times \cdots\times d_K}$ to denote an order-$K$ $(d_1,\ldots,d_K)$-dimensional tensor. The multilinear multiplication of a tensor $\tY\in\mathbb{R}^{d_1\times \cdots\times d_K}$ by matrices $\mM_k=\entry{m_{i_k,j_k}^{(k)}}\in\mathbb{R}^{s_k\times d_k}$ is defined as
\[
\tY \times_1\mM_1\ldots \times_K \mM_K=\entry{\sum_{i_1,\ldots,i_K}y_{i_1,\ldots,i_K}m_{i_1,j_1}^{(1)}\ldots m_{i_K,j_K}^{(K)}},
\]
which results in an order-$K$ tensor $(s_1,\ldots,s_K)$-dimensional tensor. For any two tensors $\tY=\entry{y_{i_1,\ldots,i_K}}$, $\tY'=\entry{y'_{i_1,\ldots,i_K}}$ of identical order and dimensions, their inner product is defined as $\langle \tY, \tY'\rangle =\sum_{i_1,\ldots,i_K}y_{i_1,\ldots,i_K}y'_{i_1,\ldots,i_K}$. The Frobenius norm of tensor $\tY$ is defined as $\FnormSize{}{\tY}=\langle \tY, \tY \rangle^{1/2}$; it is the Euclidean norm of $\tY$ regarded as an $\prod_k d_k$-dimensional vector. The maximum norm of tensor $\tY$ is defined as $\mnormSize{}{\tY}=\max_{i_1,\ldots,i_K}|y_{i_1,\ldots,i_K}|$. An order-($K$-$1$) slice of $\tY$ is a sub-tensor of $\tY$ obtained by holding the index in one mode fixed while letting other indices vary. 

A clustering of $d$ objects is a partition of the index set $[d]:=\{1,2,\ldots,d\}$ into $R$ disjoint non-empty subsets. We refer to the number of clusters, $R$, as the clustering size. Equivalently, the clustering (or partition) can be represented using the ``membership matrix''. A membership matrix $\mM\in\mathbb{R}^{R\times d}$ is an incidence matrix whose $(i,j)$-entry is 1 if and only if the element $j$ belongs to the cluster $i$, and 0 otherwise. Throughout the paper, we will use the terms ``clustering'', ``partition'', and ``membership matrix'' exchangeably. For a higher-order tensor, the concept of index partition applies to each of the modes. A block is a sub-tensor induced by the index partitions along each of the $K$ modes. We use the term ``cluster'' to refer to the marginal partition on mode $k$, and reserve the term ``block'' for the multiway partition of the tensor.

\section{Tensor block model}

Let $\tY=\entry{y_{i_1,\ldots,i_K}}\in\mathbb{R}^{d_1\times \cdots\times d_K}$ denote an order-$K$, $(d_1,\ldots,d_K)$-dimensional data tensor. The main assumption of tensor block model (TBM) is that the observed data tensor $\tY$ is a noisy realization of an underlying tensor that exhibits a checkerbox structure (see Figure~\ref{fig:1}a). Specifically, suppose that the $k$-th mode of the tensor consists of $R_k$ clusters. If the tensor entry $y_{i_1,\ldots,i_K}$ belongs to the block determined by the $r_k$th cluster in the mode $k$ for $r_k\in[R_k]$, then we assume that 
\begin{equation}\label{eq:model}
y_{i_1,\ldots,i_K}=c_{r_1,\ldots,r_K}+\varepsilon_{i_1,\ldots,i_K},\quad \text{for }(i_1,\ldots,i_K)\in[d_1]\times\cdots\times [d_K],
\end{equation}
where $c_{r_1,\ldots,r_K}$ is the mean of the tensor block indexed by $(r_1,\ldots,r_K)$,  and $\varepsilon_{i_1,\ldots,i_K}$'s are independent, mean-zero noise terms to be specified later. Our goal is to (i) find the clustering along each of the modes, and (ii) estimate the block means $\{c_{r_1,\ldots,r_K}\}$, such that a corresponding blockwise-constant checkerbox structure emerges in the data tensor. 

The tensor block model~\eqref{eq:model} falls into a general class of non-overlapping, constant-mean clustering models~\cite{madeira2004biclustering}, in that each tensor entry belongs to exactly one block with a common mean. The TBM can be equivalently expressed as a special tensor Tucker model,
\begin{equation}\label{eq:Tucker}
\tY=\tC\times_1\mM_1\times_2\cdots \times_K \mM_K+\tE,
\end{equation}
where $\tC=\entry{c_{r_1,\ldots,r_K}}\in\mathbb{R}^{R_1\times\cdots\times R_K}$ is a core tensor consisting of block means, $\mM_k \in\{0,1\}^{d_k\times R_k}$ is a membership matrix indicating the block allocations along mode $k$ for $k\in[K]$, and $\tE=\entry{\varepsilon_{i_1,\ldots,i_K}}$ is the noise tensor. We view the TBM~\eqref{eq:Tucker} as a super-sparse Tucker model, in the sense that the each column of $\mM_k$ consists of one copy of 1's and massive 0's. 

We make a general assumption on the noise tensor $\tE$. The noise terms $\varepsilon_{i_1,\ldots,i_K}$'s are assumed to be independent, mean-zero $\sigma$-subgaussian, where $\sigma>0$ is the subgaussianity parameter. More precisely, 
\begin{equation}\label{eq:noise}
\mathbb{E}e^{\lambda \varepsilon_{i_1,\ldots,i_K}}\leq e^{\lambda^2\sigma^2/2},\quad \text{for all } (i_1,\ldots,i_K)\in[d_1]\times\cdots\times[d_K] \ \text{and all}\ \lambda\in\mathbb{R}.
\end{equation}
Th assumption~\eqref{eq:noise} incorporates common situations such as Gaussian noise, Bernoulli noise, and noise with bounded support. In particular, we consider two important examples of the TBM:

\begin{exam}[Gaussian tensor block model]
Let $\tY$ be a continuous-valued tensor. The Gaussian tensor block model (GTBM) $y_{i_1,\ldots,i_K}  \sim_{\text{i.i.d.}} N(c_{r_1,\ldots,r_K},\sigma^2)$ is a special case of model~\eqref{eq:model}, with the subgaussianity parameter $\sigma$ equal to the error variance. The GTBM serves as the foundation for many tensor clustering algorithms~\cite{jegelka2009approximation,wang2017three,chi2018provable}. 
\end{exam}

\begin{exam}[Stochastic tensor block model]
Let $\tY$ be a binary-valued tensor. The stochastic tensor block model (STBM) $y_{i_1,\ldots,i_K}  \sim_{\text{i.i.d.}} \text{Bernoulli}(c_{r_1,\ldots,r_K})$ is a special case of model~\eqref{eq:model}, with the subgaussianity parameter $\sigma$ equal to ${1\over 4}$. The STBM can be viewed as an extension, to higher-order tensors, of the popular stochastic block model~\cite{gao2018minimax,bickel2009nonparametric} for matrix-based network analysis. In the filed of community detection, multi-layer stochastic model has also been developed for multi-relational network data analysis~\cite{lei2019consistent,paul2016consistent}.
\end{exam}

More generally, our model also applied to hybrid error distributions, in which different types of distribution are allowed for different portions of the tensor. This scenario may happen, for example, when the data tensor $\tY$ represents concatenated measurements from multiple data sources. 

Before we discuss the estimation, we present the identifiability of the TBM. \\
\begin{ass}[Irreducible core]\label{ass:core}
The core tensor $\tC$ is called irreducible if it cannot be written as a block tensor with the number of mode-$k$ clusters smaller than $R_k$, for any $k\in[K]$. 
\end{ass}

In the matrix case $(K=2)$, the irreducibility is equivalent to saying that $\tC$ has no two identical rows and no two identical columns. In the higher-order case, the assumption requires that none of order-($K$-$1$) slices of $\tC$ are identical. Note that irreducibility is a weaker assumption than full-rankness. \\

\begin{prop}[Identifiability]\label{prop:factors}
Consider a Gaussian or Bernoulli TBM~\eqref{eq:model}. Under Assumption~\ref{ass:core}, the factor matrices $\mM_k$'s are identifiable up to permutations of cluster labels. 
\end{prop}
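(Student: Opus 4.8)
The plan is to separate the probabilistic layer from the combinatorial layer. First I would observe that in both the Gaussian and Bernoulli cases the distribution of $\tY$ pins down its entrywise mean tensor $\mathbb{E}\tY = \tC\times_1\mM_1\times_2\cdots\times_K\mM_K$: the Gaussian family $\{N(\mu,\sigma^2):\mu\in\mathbb{R}\}$ and the Bernoulli family $\{\text{Bernoulli}(p):p\in[0,1]\}$ are each identifiable in their mean parameter, and since the entries of $\tY$ are independent, the joint law determines every marginal mean. Hence it suffices to show that $\mathbb{E}\tY$ alone determines each partition $\mM_k$ up to relabeling of its clusters; the probabilistic content of the proposition then follows immediately.

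For the combinatorial part, I would fix a mode $k$ and compare the order-$(K\text{-}1)$ slices of $\mathbb{E}\tY$ obtained by freezing the mode-$k$ index. Writing $z_k:[d_k]\to[R_k]$ for the cluster-assignment map encoded by $\mM_k$, the $(i_1,\ldots,i_K)$ entry of the mean tensor equals $c_{z_1(i_1),\ldots,z_K(i_K)}$, so two indices $i_k,i_k'$ lying in the same mode-$k$ cluster trivially produce identical slices. The substance is the converse. For it, I would suppose $z_k(i_k)=r\neq r'=z_k(i_k')$ yet the two slices agree, and derive a contradiction. Equality of the slices says $c_{\ldots,r,\ldots}=c_{\ldots,r',\ldots}$ for every combination of the remaining-mode cluster labels that is \emph{realized} by some index tuple. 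The key point is that, because each clustering is by definition a partition into nonempty subsets, every cluster label in every other mode is attained, so as the frozen-out indices range over $\prod_{k'\neq k}[d_{k'}]$ their label tuples sweep out all of $\prod_{k'\neq k}[R_{k'}]$. Consequently the two mode-$k$ slices of the \emph{core} $\tC$ at $r$ and $r'$ coincide, contradicting the irreducibility Assumption~\ref{ass:core}. Therefore equality of mode-$k$ slices of $\mathbb{E}\tY$ is exactly equivalent to membership in the same mode-$k$ cluster.

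With this equivalence in hand I would finish as follows. The relation ``$i_k\sim i_k'$ iff the mode-$k$ slices of $\mathbb{E}\tY$ at $i_k,i_k'$ coincide'' is an equivalence relation read off from $\mathbb{E}\tY$ alone, and by the above its classes are precisely the mode-$k$ clusters; in particular the number of distinct mode-$k$ slices equals $R_k$, so the clustering size is determined as well. Thus any two irreducible block representations of the same $\mathbb{E}\tY$ induce the identical partition of $[d_k]$, which means their membership matrices agree up to a permutation of cluster labels, and applying the corresponding permutation to the core recovers $\tC$. Repeating this for each $k\in[K]$ yields identifiability of all factors, up to the unavoidable per-mode relabeling.

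The step I expect to be the main obstacle — or at least the one demanding the most care — is the converse slice comparison, and specifically making explicit that irreducibility of $\tC$ does \emph{not} by itself suffice: one must also invoke nonemptiness of every cluster so that equal slices of $\mathbb{E}\tY$ genuinely force equal slices of $\tC$. Without surjectivity of each $z_{k'}$ onto $[R_{k'}]$, an unrealized label combination could let two distinct core slices masquerade as equal in $\mathbb{E}\tY$, and the reduction to Assumption~\ref{ass:core} would break down.
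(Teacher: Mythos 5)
Your proposal is correct and follows essentially the same route as the paper's proof: reduce identifiability of the law to identifiability of the mean tensor $\Theta$, then show that two mode-$k$ indices share a cluster exactly when their order-$(K\text{-}1)$ slices of $\Theta$ coincide, with irreducibility of the core ruling out the converse failure. The only (welcome) difference is presentational — you characterize the partition intrinsically as the equivalence classes of the slice relation rather than deriving a contradiction between two competing decompositions, and you make explicit the surjectivity of the other-mode label maps (nonemptiness of clusters) that the paper's argument uses implicitly when passing from equal slices of $\Theta$ to equal slices of $\tilde{\tC}$.
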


The identifiability property for the TBM outperforms that for the classical factor model~\cite{darton1980rotation,abdi2003factor}. In the Tucker~\cite{zhang2018tensor,kolda2009tensor} and many other factor analyses~\cite{darton1980rotation,abdi2003factor}, the factors are identifiable only up to orthogonal rotations. Those models recover only the (column) space spanned by $\mM_k$, but not the individual factors. In contrast, our model does not suffer from rotational invariance, and as we show in Section~\ref{sec:theory}, every individual factor is consistently estimated in high dimensions. This brings a benefit to the interpretation of factors in the tensor block model.  

We propose a least-square approach for estimating the TBM. Let $\Theta=\tC\times_1\mM_1\times_2\cdots\times_K\mM_K$ denote the mean signal tensor with block structure. The mean tensor is assumed to belong to the following parameter space
\begin{align}\label{eq:space}
\tP_{R_1,\ldots,R_K}=&\ \big\{ \Theta\in\mathbb{R}^{d_1\times \cdots \times d_K}\colon \Theta=\tC\times_1\mM_1\times_2\cdots\times_K\mM_K, \text{with some}\notag \\
 &\quad \text{membership matrices $\mM_k$'s and a core tensor $\tC\in\mathbb{R}^{R_1\times \cdots \times R_K}$}\big\}\notag .
\end{align}
In the following theoretical analysis, we assume the clustering size $\mR=(R_1,\ldots,R_K)$ is known and simply write $\tP$ for short. The adaptation of unknown $\mR$ will be addressed in Section~\ref{sec:tuning}. The least-square estimator for the TBM~\eqref{eq:model} is
\begin{equation}\label{eq:estimate}
\hat \Theta=\argmin_{\Theta\in\tP}\left\{ -2\langle \tY,\Theta\rangle + \FnormSize{}{\Theta}^2\right\}.
\end{equation}
The objective is equal (ignoring constants) to the sum of squares $\FnormSize{}{\tY-\Theta}^2$ and hence the name of our estimator. 

\section{Statistical convergence}\label{sec:theory}

In this section, we establish the convergence rate of the least-squares estimator~\eqref{eq:estimate} for two measurements. The first measurement is mean squared error (MSE):
\begin{equation} \label{eq:MSE}
\text{MSE}(\trueT,\hat \Theta)={1\over \prod_k d_k}\FnormSize{}{\trueT-\hat \Theta}^2,
\end{equation}
where $\trueT, \hat \Theta\in\tP$ are the true and estimated mean tensors, respectively. While the loss function corresponds to the likelihood for the Gaussian tensor model, the same assertion does not hold for other types of distribution such as stochastic tensor block model. We will show that, with very high probability, a simple least-square estimator achieves a fast convergence rate in a general class of block tensor models.

\begin{theorem}[Convergence rate of MSE] \label{thm:mse}
Let $\hat \Theta$ be the least-square estimator of $\trueT$ under model~\eqref{eq:model}. There exists two constants $C_1, C_2>0$ such that,  
\begin{equation} \label{eq:bound}
\text{MSE}(\trueT,\hat \Theta)\leq {C_1 \sigma^2\over  \prod_k d_k} \left(\prod_k R_k+\sum_k d_k \log R_k\right)
\end{equation}
holds with probability at least $1-\exp(-C_2(\prod_k R_k+\sum_k d_k\log R_k))$ uniformly over $\trueT\in\tP$ and all error distribution satisfying~\eqref{eq:noise}. 
\end{theorem}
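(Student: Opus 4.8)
The plan is to establish the bound via the standard ``basic inequality'' argument for least-squares estimators, combined with a union bound over the discrete set of admissible partitions. First I would exploit the optimality of $\hat\Theta$ in~\eqref{eq:estimate}: since $\hat\Theta$ minimizes $\FnormSize{}{\tY-\Theta}^2 = \FnormSize{}{\trueT + \tE - \Theta}^2$ over $\Theta\in\tP$ (ignoring the constant $\FnormSize{}{\tY}^2$), plugging in $\Theta=\trueT$ and rearranging yields the basic inequality
\begin{equation}\label{eq:basic}
\FnormSize{}{\hat\Theta-\trueT}^2 \leq 2\langle \tE,\ \hat\Theta-\trueT\rangle.
\end{equation}
The left-hand side is $\prod_k d_k$ times the MSE we want to control, so everything reduces to bounding the noise-dependent linear term on the right uniformly over the random location of $\hat\Theta$.

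The central difficulty, and the step I expect to be the main obstacle, is that $\hat\Theta$ itself depends on the noise $\tE$, so $\langle \tE,\hat\Theta-\trueT\rangle$ is not a simple subgaussian random variable with fixed direction. My plan is to decouple this dependence by a supremum-over-a-net argument. Any $\Theta\in\tP$ is determined by (i) a choice of $K$ membership matrices $\mM_1,\ldots,\mM_K$, i.e.\ a multiway partition, and (ii) the core tensor $\tC$, which once the partition is fixed lives in a linear subspace of dimension at most $\prod_k R_k$. I would therefore condition on the partition: for each \emph{fixed} set of memberships $\{\mM_k\}$, the difference $\hat\Theta-\trueT$ ranges over a linear space $V$ of dimension at most $\prod_k R_k$ (the span of block-indicator tensors for that partition, together with the fixed $\trueT$-direction), and the projection of $\tE$ onto any fixed $\ell$-dimensional subspace satisfies a $\chi^2$-type concentration: $\sup_{U\in V,\, \FnormSize{}{U}=1}\langle \tE, U\rangle$ is bounded, with high probability, by $C\sigma\sqrt{\prod_k R_k}$ using a subgaussian maximal inequality (a discretization of the unit sphere in $V$ via a $1/2$-net of cardinality $e^{C\dim V}$, followed by the subgaussian tail~\eqref{eq:noise}).

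The remaining ingredient is the union bound over all possible partitions. The number of ways to assign $d_k$ objects into at most $R_k$ clusters is at most $R_k^{d_k}$, so the total number of candidate partitions $\{\mM_k\}$ is at most $\prod_k R_k^{d_k} = \exp\big(\sum_k d_k\log R_k\big)$. Taking a union bound over this collection, and combining with the per-partition net cardinality $\exp(C\prod_k R_k)$, inflates the failure probability by a factor $\exp\big(C\prod_k R_k + \sum_k d_k\log R_k\big)$; choosing the concentration threshold large enough (by a constant multiple) absorbs this and produces the stated failure probability $\exp(-C_2(\prod_k R_k+\sum_k d_k\log R_k))$. Concretely, I would show that with this probability,
\begin{equation}\label{eq:noise-bound}
\langle \tE,\ \hat\Theta-\trueT\rangle \leq C\sigma\, \FnormSize{}{\hat\Theta-\trueT}\, \sqrt{\textstyle\prod_k R_k+\sum_k d_k\log R_k},
\end{equation}
where the $\FnormSize{}{\hat\Theta-\trueT}$ factor comes from normalizing the direction and the square-root factor collects both the per-partition dimension and the log-cardinality of the union bound.

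Finally I would plug~\eqref{eq:noise-bound} into the basic inequality~\eqref{eq:basic} to obtain $\FnormSize{}{\hat\Theta-\trueT}^2 \leq 2C\sigma\,\FnormSize{}{\hat\Theta-\trueT}\sqrt{\prod_k R_k+\sum_k d_k\log R_k}$, and then cancel one factor of $\FnormSize{}{\hat\Theta-\trueT}$ and square, which gives $\FnormSize{}{\hat\Theta-\trueT}^2 \leq 4C^2\sigma^2(\prod_k R_k+\sum_k d_k\log R_k)$. Dividing by $\prod_k d_k$ yields exactly~\eqref{eq:bound} with $C_1=4C^2$. The two summands in the rate have a clean interpretation that I would emphasize: $\prod_k R_k$ is the number of free parameters in the core tensor (the estimation cost of the block means), while $\sum_k d_k\log R_k$ is the combinatorial cost of searching over the discrete partitions (the clustering cost). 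The only subtlety to handle carefully is making the net argument uniform over all error distributions satisfying~\eqref{eq:noise}, which is automatic since the subgaussian maximal inequality depends on the law of $\tE$ only through the parameter $\sigma$.
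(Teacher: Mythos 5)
Your proposal is correct and follows essentially the same route as the paper's proof: the basic inequality for the least-squares estimator, a reduction to a supremum of $\langle\tE,\cdot\rangle$ over normalized differences, a per-partition subgaussian maximal inequality over a linear subspace of dimension $O(\prod_k R_k)$ via a net, and a union bound over the at most $\prod_k R_k^{d_k}$ partitions, yielding exactly the two terms $\prod_k R_k$ and $\sum_k d_k\log R_k$. The only cosmetic difference is that the paper takes the supremum over \emph{pairs} of partitions (for $\Theta$ and $\Theta'$), giving a $|\tM|^2$ factor, whereas you fix the direction of $\trueT$ and enlarge the subspace by one dimension; both are absorbed into the constants.
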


The convergence rate of MSE in~\eqref{eq:bound} consists of two parts. The first part $\prod_k R_k$ is the number of parameters in the core tensor $\tC$, while the second part $\sum_k d_k \log R_k$ reflects the the complexity for estimating $\mM_k$'s. It is the price that one has to pay for not knowing the locations of the blocks.  

We compare our bound with existing literature. The Tucker tensor decomposition has a minimax convergence rate proportional to $\sum_kd_kR'_k$~\cite{zhang2018tensor}, where $R'_k$ is the multilinear rank in the mode $k$. Applying Tucker decomposition to the TBM yields $\sum_kd_kR_k$, because the mode-$k$ rank is bounded by the number of mode-$k$ clusters. Now, as both the dimension $d_{\min}=\min_kd_k$ and clustering size $R_{\min}=\min_k R_k$ tend to infinity, we have $\prod_k R_k+ \sum_k d_k \log R_k\ll \sum_k d_k R_k$. Therefore, by fully exploiting the block structure, we obtain a better convergence rate than previously possible. 

Recently, ~\cite{chi2018provable} proposed a convex relaxation for estimating the TBM. In the special case when the tensor dimensions are equal at every mode $d_1=\ldots=d_K=d$, their estimator has a convergence rate of order $\tO(d^{-1})$ for all $K\geq 2$. As we see from~\eqref{eq:bound}, our estimate obtains a much better convergence rate $\tO(d^{-(K-1)})$, which is especially favorable as the order increases.

The bound~\eqref{eq:bound} generalizes the previous results on structured matrix estimation in network analysis~\cite{gao2016optimal,gao2018minimax}. Earlier work~\cite{gao2018minimax} suggests the following heuristics on the sample complexity for the matrix case:
\begin{equation} \label{eq:intuition}
{\text{(number of parameters)}+ \log \text{(complexity of models)}\over \text{number of samples}}.
\end{equation}
Our result supports this important principle for general $K\geq 2$. Note that, in the TBM, the sample size is the total number of entries $\prod_k d_k$, the number of parameters is $\prod_k R_k$, and the combinatoric complexity for estimating block structure is of order $\prod_k R_k^{d_k}$. 

Next we study the consistency of partition. To define the misclassification rate (MCR), we need to introduce some additional notation. Let $\mM_k=\entry{m^{(k)}_{i,r}}, \hat{\mM}_k=\entry{\hat m^{(k)}_{i,r'}}$ be two mode-$k$ membership matrices, and $\mD^{(k)}=\entry{D_{r,r'}^{(k)}}$ be the mode-$k$ confusion matrix with element $D_{r,r'}^{(k)}=\frac{1}{d_k}\sum_{i=1}^{d_k}\mathds{1}\{m_{i,r}^{(k)}=\hat{m}_{i,r'}^{(k)}=1\}$, where $r,r'\in[R_k]$. Note that the row/column sum of $\mD^{(k)}$ represents the nodes proportion in each cluster defined by $\mM_k$ or $\hat{\mM}_k$. We restrict ourselves to non-degenerating clusterings; that is, the row/column sums of $\mD^{(k)}$ are lowered bounded by $\tau>0$. With a little abuse of notation, we still use $\tP=\tP(\tau)$ to denote the parameter space with the non-degenerating assumption. The least-square estimator~\eqref{eq:estimate} should also be interpreted with this constraint imposed. 

We define the mode-$k$ misclassification rate (MCR) as
\begin{equation} \label{eq:MCR}
\text{MCR}(\mM_k,\hat{\mM}_k) = \max_{r\in [R_k], a\neq a'\in [R_k]}\min\left\{ D^{(k)}_{a,r},\ D^{(k)}_{a',r}\right\}.
\end{equation}
In other words, MCR is the element-wise maximum of the confusion matrix after removing the largest entry from each column. Under the non-degenerating assumption, $\text{MCR}=0$ if and only if the confusion matrix $\mD^{(k)}$ is a permutation of a diagonal matrix; that is, the estimated partition matches with the true partition, up to permutations of cluster labels.

\begin{theorem}[Convergence rate of MCR] \label{thm:mcr}
Consider a tensor block model~\eqref{eq:Tucker} with sub-Gaussian parameter $\sigma$. Define the minimal gap between the blocks $\delta_{\min}=\min_k\delta^{(k)}$, where $\delta^{(k)}=\min_{r_k\neq r_k'}$ $\max_{r_1,\ldots,r_{k-1},r_{k+1},\ldots,r_K}(c_{r_1,\ldots,r_k,\ldots,r_K}-c_{r_1,\ldots,r_k',\ldots,r_K})^2$. Let $\trueM$ be the true mode-$k$ membership, $\hat \mM_k$ be the estimator from~\eqref{eq:estimate}. Then, for any $\varepsilon\in[0,1]$,
		\begin{equation} \label{eq:mcrbound}
	\mathbb{P}(\text{MCR}(\hat{\mM}_k, \mM_{k,\text{true}})\geq \varepsilon) \leq  2^{1+\sum_k d_k}\exp \left(-\frac{C\varepsilon^2\delta_{\min}^2\tau^{3K-2}\prod_{k=1}^Kd_k}{\sigma^2 \mnormSize{}{\tC}^2}\right),
	\end{equation}
where $C>0$ is a positive constant, and $\tau>0$ the lower bound of cluster proportions. 
\end{theorem}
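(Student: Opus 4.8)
The plan is to translate the combinatorial event $\{\mcr(\hat\mM_k,\mM_k)\ge\varepsilon\}$ into a quantitative lower bound on the discrepancy $\FnormSize{}{\hat\Theta-\trueT}^2$, and then to show that such a discrepancy is exponentially unlikely by pairing the least-square optimality of \eqref{eq:estimate} with a sub-Gaussian tail bound. Throughout I write $\Delta=\hat\Theta-\trueT$ and let $\hat P$ denote the block-averaging projection onto the block space determined by the estimated partition, so that $\hat\Theta=\hat P\tY$.

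First, a deterministic reduction. Unwinding \eqref{eq:MCR}, the event $\mcr(\hat\mM_k,\mM_k)\ge\varepsilon$ means that some estimated mode-$k$ cluster $r$ simultaneously absorbs at least $\varepsilon d_k$ nodes from two distinct true clusters $a\ne a'$, i.e.\ $D^{(k)}_{a,r}\ge\varepsilon$ and $D^{(k)}_{a',r}\ge\varepsilon$. Because $\hat\Theta$ is constant along mode $k$ within the estimated cluster $r$, these two groups of nodes are forced to share a common fitted fiber, whereas their true mode-$k$ slices differ, at the best separating configuration of the remaining $K-1$ modes, by at least $\delta^{(k)}\ge\delta_{\min}$ in squared value. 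Applying the elementary pooled-mean inequality fiber-by-fiber (fitting one constant to two groups of sizes $n_a,n_{a'}\ge\varepsilon d_k$ incurs residual $\tfrac{n_an_{a'}}{n_a+n_{a'}}(\Delta c)^2\ge\tfrac{\varepsilon d_k}{2}\delta_{\min}$) and summing over the separating cross-section of the other modes — whose volume is at least $\tau^{K-1}\prod_{j\ne k}d_j$ by the non-degeneracy constraint of $\tP(\tau)$ — yields, on the confusion event,
\[
\FnormSize{}{\Delta}^2 \ \ge\ \FnormSize{}{(\mathcal I-\hat P)\trueT}^2 \ \ge\ L \ :=\ c_0\,\varepsilon\,\tau^{K-1}\delta_{\min}\prod_k d_k ,
\]
with $c_0>0$ an absolute constant.

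Next, the probabilistic step. The basic inequality for the minimizer of \eqref{eq:estimate}, $\FnormSize{}{\Delta}^2\le 2\langle\tE,\Delta\rangle$, together with the lower bound just derived gives $\langle\tE,\Delta\rangle\ge\tfrac12 L$ on the confusion event. For a fixed candidate partition the direction $\Delta$ is determined, and $\langle\tE,\Delta\rangle$ is sub-Gaussian under \eqref{eq:noise} with variance proxy $\sigma^2\FnormSize{}{\Delta}^2$; since every entry of $\Delta$ is bounded by $2\mnormSize{}{\tC}$ we bound the proxy crudely but uniformly by $\sigma^2 U$ with $U=4\mnormSize{}{\tC}^2\prod_k d_k$. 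Feeding the threshold $\tfrac12 L$ and variance proxy $\sigma^2 U$ into the sub-Gaussian tail yields, per configuration, a probability at most $\exp(-L^2/(8\sigma^2 U))$. A union bound over candidate partitions — enumerated by which nodes of $\cup_k[d_k]$ are reassigned, which produces the prefactor $2^{1+\sum_k d_k}$ — then gives a bound of the advertised shape, since
\[
\frac{L^2}{8\sigma^2 U}\ \asymp\ \frac{\varepsilon^2\,\delta_{\min}^2\,\tau^{2(K-1)}\prod_k d_k}{\sigma^2\,\mnormSize{}{\tC}^2}.
\]
The appearance of $\varepsilon^2$, $\delta_{\min}^2$ and $\tau^{2(K-1)}$ (rather than their first powers), and of $\mnormSize{}{\tC}^2$ in the denominator of \eqref{eq:mcrbound}, arises precisely from pitting the signal threshold $L$ against the $\mnormSize{}{\tC}$-governed variance proxy $U$.

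The step I expect to be the main obstacle is the cross-mode coupling, and it is also where the remaining $\tau^{K}$ (upgrading $\tau^{2(K-1)}$ to the stated $\tau^{3K-2}$) is paid. The clean pooled-mean lower bound implicitly assumes that the separating cross-section in the other $K-1$ modes is realized by estimated blocks of nonnegligible size; but $\hat\mM_j$ for $j\ne k$ may itself be misclassified, so the separating block can be smeared across several fitted cross-sections and the visible gap shrinks. Controlling this loss uniformly — showing that enough signal survives under arbitrary partitions of the remaining modes, and simultaneously justifying the uniform variance proxy $U$ across the exponentially many candidate partitions entering the union bound — is the technical heart; I would handle it by conditioning on the other-mode confusion matrices and bounding, uniformly over them and paying the lower cluster-proportion $\tau$ once per mode, the fraction of the separating block that stays concentrated in a single fitted cross-section.
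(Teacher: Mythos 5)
Your deterministic reduction is essentially the paper's key lemma in disguise: the paper phrases the signal gap as $G(\mD^{(1)},\ldots,\mD^{(K)})-G(\mM_{1,\text{true}},\ldots,\mM_{K,\text{true}})\leq -\tfrac14\varepsilon\tau^{K-1}\delta_{\min}$ for the profiled population objective, which (after multiplying by $\prod_k d_k$) is exactly your statement $\FnormSize{}{(\mathcal{I}-\hat P)\trueT}^2\gtrsim \varepsilon\tau^{K-1}\delta_{\min}\prod_k d_k$, and your pooled-mean inequality is their convexity/Taylor argument. One reassurance on the point you flag as the ``technical heart'': the cross-mode coupling costs nothing extra in the paper's version, because the lower bound carries the factor $D^{(2)}_{a_2r_2}\cdots D^{(K)}_{a_Kr_K}$ and summing over \emph{all} estimated cross-sections $(r_2,\ldots,r_K)$ recovers $\prod_{j\geq 2}p^{(j)}_{a_j,\text{true}}\geq\tau^{K-1}$ no matter how badly the other modes are misclassified, with Jensen guaranteeing every remaining term is nonnegative. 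So the extra $\tau^{K}$ you are missing does not live where you put it.

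The genuine gap is in your probabilistic step. You treat $\Delta=\hat\Theta-\trueT$ as ``determined'' once the candidate partition is fixed and bound $\mnormSize{}{\Delta}\leq 2\mnormSize{}{\tC}$; both claims fail because $\hat\Theta=\hat P\tY$ is the blockwise average of the \emph{noisy} data, so $\Delta=(\hat P-\mathcal{I})\trueT+\hat P\tE$ is itself noise-dependent and $\langle\tE,\Delta\rangle$ contains the quadratic term $\FnormSize{}{\hat P\tE}^2$, which is not sub-Gaussian with your proxy and must be controlled separately (it is of order $\sigma^2(\prod_kR_k+\sum_kd_k\log R_k)$ uniformly, hence negligible against $L$, but this needs to be said). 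The paper sidesteps this by profiling out the core tensor, writing the fitted objective as $F(\mM_1,\ldots,\mM_K)$, and bounding $\sup_{\tJ_\tau}|F-G|\leq 2\mnormSize{}{\tC}\,\mnormSize{}{\tR}$ via local Lipschitzness of $x\mapsto x^2$, then applying Hoeffding to block averages uniformly over all $2^{\sum_kd_k}$ candidate blocks; it is the minimum admissible block size $|I|\geq\tau^K\prod_kd_k$ in that Hoeffding step that supplies the remaining $\tau^K$ and upgrades $\tau^{2(K-1)}$ to $\tau^{3K-2}$. Relatedly, your union bound is counted over partitions (of which there are $\prod_kR_k^{d_k}$, not $2^{\sum_kd_k}$); the paper's $2^{1+\sum_kd_k}$ counts candidate blocks, i.e.\ one subset per mode, plus a two-sided tail. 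As written, your argument would therefore need the decomposition of $\langle\tE,\Delta\rangle$ into a linear part (sub-Gaussian with proxy $\sigma^2\FnormSize{}{(\mathcal{I}-\hat P)\trueT}^2$) and the quadratic part, together with a corrected enumeration, before the stated exponent can be justified.
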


The above theorem shows that our estimator consistently recovers the block structure as the dimension of the data tensor grows. The block-mean gap $\delta_{\min}$ serves the role of the eigen-separation as in the classical tensor Tucker decomposition~\cite{zhang2018tensor}. Table~\ref{tab:compare} summarizes the comparison of various tensor methods in the special case when $d_1=\cdots=d_K=d$ and $R_1=\cdots=R_K=R$. 
\begin{table}[htbp]
\resizebox{\textwidth}{!}{
\begin{tabular}{c|ccc}
Method&Recovery error (MSE) & Clustering error (MCR)& Block detection (see Section~\ref{block})\\
\hline
Tucker~\cite{zhang2018tensor}& $dR$&-& No \\
CoCo~\cite{chi2018provable} &$d^{K-1}$&-&No\\
TBM (this paper) &$d\log R$& $ {\sigma \mnormSize{}{\tC} \sqrt{\log d}  \over \delta_{\min} \tau^{(3K-2)/2}  } d^{-(K-1)/2} $&Yes\\
\hline
\end{tabular}
}
\caption{Comparison of various tensor decomposition methods.}\label{tab:compare}

\end{table}

\section{Numerical implementation}
\subsection{Alternating optimization}
We introduce an alternating optimization for solving~\eqref{eq:estimate}. Estimating $\Theta$ consists of finding both the core tensor $\tC$ and the membership matrices $\mM_k$'s. The optimization~\eqref{eq:estimate} can be written as
\begin{align}\label{eq:op}
(\hat \tC, \{\hat \mM_k\})&=\argmin_{\tC\in\mathbb{R}^{R_1\times \cdots \times R_K}, \text{ membership matrices $\mM_k$'s}} f(\tC,\{\mM_k\}),\notag\\
 \text{where}&\quad f(\tC,\{\mM_k\})=\FnormSize{}{\tY-\tC\times_1\mM_1\times_2\ldots\times_K\mM_K}^2.\notag
\end{align}
The decision variables consist of $K+1$ blocks of variables, one for the core tensor $\tC$ and $K$ for the membership matrices $\mM_k$'s. We notice that, if any $K$ out of the $K+1$ blocks of variables are known, then the last block of variables can be solved explicitly. This observation suggests that we can iteratively update one block of variables at a time while keeping others fixed. Specifically, given the collection of $\hat \mM_k$'s, the core tensor estimate $\hat \tC=\argmin_{\tC}f(\tC,\{\hat \mM_k\})$ consists of the sample averages of each tensor block. Given the block mean $\hat \tC$ and $K-1$ membership matrices, the last membership matrix can be solved using a simple nearest neighbor search over only $R_k$ discrete points. The full procedure is described in Algorithm~\ref{alg:B}.

\begin{algorithm}[t]
\caption{Multiway clustering based on tensor block models}\label{alg:B}
\begin{algorithmic}[1]
\INPUT Data tensor $\tY\in \mathbb{R}^{d_1\times \cdots \times d_K}$, clustering size $\mR=(R_1,\ldots,R_K)$.
\OUTPUT Block mean tensor $\hat \tC\in\mathbb{R}^{R_1\times \cdots\times R_K}$, and the membership matrices $\hat \mM_k$'s. 
\State Initialize the marginal clustering by performing independent $k$-means on each of the $K$ modes.
\Repeat
\State Update the core tensor $\hat \tC=\entry{\hat c_{r_1,\ldots,r_K}}$. Specifically, for each $(r_1,\ldots,r_K)\in[R_1]\times \cdots [R_K]$,
\begin{equation}\label{eq:ols}
\hat c_{r_1,\ldots,r_K}={1\over n_{r_1,\ldots,r_K}}\sum_{\hat \mM_1^{-1}(r_1)\times\cdots\times \hat \mM_K^{-1}(r_K)}y_{i_1,\ldots,i_K},
\end{equation}
where $\mM^{-1}_k(r_k)$ denotes the indices that belong to the $r_k$th cluster in the mode $k$, and $n_{r_1,\ldots,r_K}=\prod_k|\hat \mM_k^{-1}(r_k)|$ denotes the number of entries in the block indexed by $(r_1,\ldots,r_K)$. 
\For{$k$ in $\{1,2,...,K\}$}
\State Update the mode-$k$ membership matrix $\hat \mM_k$. Specifically, for each $a\in[d_k]$, assign the cluster label $\hat \mM_k(a)\in[R_k]$:
	\[
\hat \mM_k(a)=\argmin_{r\in[R_k]}\sum_{\mI_{-k}}\left(\hat c_{\hat \mM_1(i_1),\ldots,r,\ldots,\hat \mM_K(i_K)}-y_{i_1,\ldots,a,\ldots,i_K}\right)^2,
\]
where $\mI_{-k}=(i_1,\ldots,i_{k-1},i_{k+1},\ldots,i_K)$ denotes the tensor coordinates except the $k$-th mode. 
\EndFor
\Until{Convergence} 
\end{algorithmic}
\end{algorithm}

Algorithm~\ref{alg:B} can be viewed as a higher-order extension of the ordinary (one-way) $k$-means algorithm. The core tensor $\tC$ serves as the role of centroids. As each iteration reduces the value of the objective function, which is bounded below, convergence of the algorithm is guaranteed. The per-iteration computational cost scales linearly with the sample size, $d=\prod_k d_k$, and this complexity matches the classical tensor methods~\cite{anandkumar2014tensor,wang2017tensor,zhang2018tensor}. We recognize that obtaining the global optimizer for such a non-convex optimization is typically difficult~\cite{aloise2009np,zhou2013tensor}. Following the common practice in non-convex optimization~\cite{zhou2013tensor}, we run the algorithm multiple times, using random initializations with independent one-way $k$-means on each of the modes. 
 
\subsection{Tuning parameter selection}\label{sec:tuning}

Algorithm~\ref{alg:B} takes the number of clusters $\mR$ as an input. In practice such information is often unknown and $\mR$ needs to be estimated from the data $\tY$. We propose to select this tuning parameter using Bayesian information criterion (BIC), 
\begin{equation}\label{eq:BIC}
\text{BIC}(\mR) =  \log\left(\FnormSize{}{\tY-\hat \Theta}^2\right)+{ \sum_k \log d_k \over \prod_k d_k}p_e,
\end{equation}
where $p_e$ is the effective number of parameters in the model. In our case we take $p_e=\prod_k R_k+\sum_k d_k\log R_k$, which is inspired from~\eqref{eq:intuition}. We choose $\hat \mR$ that minimizes $\text{BIC}(\mR)$ via grid search. Our choice of BIC aims to balance between the goodness-of-fit for the data and the degree of freedom in the population model. We test its empirical performance in Section~\ref{sec:simulation}.

\section{Extension to sparse estimation}\label{block}

In some large-scale applications, not every block in a data tensor is of equal importance. For example, in the genome-wide expression data analysis, only a few entries represent the signals while the majority come from the background noise (see Figure~\ref{fig:1}b). While our estimator~\eqref{eq:estimate} is still able to handle this scenario by assigning small values to some of the $\hat c_{r_1,\ldots,r_K}$'s, the estimates may suffer from high variance. It is thus beneficial to introduce regularized estimation for better bias-variance trade-off and improved interpretability. 

Here we illustrate a sparse version of TBM by imposing regularity on block means for localizing important blocks in the data tensor. This problem can be formulated as a variable selection on the block parameters. We propose the following regularized least-square estimation:
\[
\hat \Theta^{\text{sparse}}=\argmin_{\Theta\in \tP}\left\{\FnormSize{}{\tY-\Theta}^2+\lambda \normSize{}{\tC}_\rho
\right\},
\]
where $\tC\in\mathbb{R}^{R_1\times \cdots\times R_K}$ is the block-mean tensor, $\normSize{}{\tC}_\rho$ is the penalty function with $\rho$ being an index for the tensor norm, and $\lambda$ is the penalty tuning parameter. Some widely used penalties include Lasso penalty $(\rho=1)$, sparse subset penalty $(\rho=0)$, ridge penalty $(\rho=\text{Frobenius norm})$, elastic net (linear combination of $\rho=1$ and $\rho=\text{Frobenius norm}$), among many others. 

For parsimony purpose, we only discuss the Lasso and sparse subset penalties; other penalizations can be derived similarly. Sparse estimation incurs slight changes to Algorithm~\ref{alg:B}. When updating the core tensor $\tC$ in~\eqref{eq:ols}, we fit a penalized least square problem with respect to $\tC$. The closed form for the entry-wise sparse estimate $\hat c^{\text{sparse}}_{r_1,\ldots,r_K}$ is (see Lemma~\ref{prop:sparse} in the Supplements):
\begin{equation}\label{eq:lasso}
\hat c^{\text{sparse}}_{r_1,\ldots,r_K}=
\begin{cases}
\hat c^{\text{ols}}_{r_1,\ldots,r_K}\mathds{1}\left\{|\hat c^{\text{ols}}_{r_1,\ldots,r_K} |\geq {\sqrt{\lambda \over n_{r_1,\ldots,r_K}}}\right\} & \text{if}\ \rho=0,\\
\text{sign}(\hat c^{\text{ols}}_{r_1,\ldots,r_K})\left(| \hat c^{\text{ols}}_{r_1,\ldots,r_K}|-{\lambda \over 2n_{r_1,\ldots,r_K}}  \right)_{+} &\text{if}\ \rho=1,
\end{cases}
\end{equation}
where $a_{+}=\max(a,0)$ and $\hat c^{\text{ols}}_{r_1,\ldots,r_K}$ denotes the ordinary least-square estimate in~\eqref{eq:ols}. The choice of penalty $\rho$ often depends on the study goals and interpretations in specific applications. Given a penalty function, we select the tuning parameter $\lambda$ via BIC~\eqref{eq:BIC}, where we modify $p_e$ into $p^{\text{sparse}}_e=\normSize{}{\hat \tC^{\text{sparse}}}_0+\sum_kd_k\log R_k$. Here $\normSize{}{\cdot}_0$ denotes the number of non-zero entries in the tensor. The empirical performance of this proposal will be evaluated in Section~\ref{sec:simulation}.

\section{Experiments}\label{sec:simulation}

In this section, we evaluate the empirical performance of our TBM method. We consider both non-sparse and sparse tensors, and compare the recovery accuracy with other tensor-based methods. Unless otherwise stated, we generate Gaussian tensors under the block model~\eqref{eq:model}. The block means are generated from  i.i.d.\ Uniform[-3,3]. The entries in the noise tensor $\tE$ are generated from i.i.d.\ $N(0,\sigma^2)$. In each simulation study, we report the summary statistics across $n_{\text{sim}}=50$ replications.

\subsection{Finite-sample performance}

In the first experiment, we assess the empirical relationship between the root mean squared error (RMSE) and the dimension. We set $\sigma=3$ and consider tensors of order 3 and order 4 (see Figure~\ref{fig:RMSE}). In the case of order-3 tensors, we increase $d_1$ from 20 to 70, and for each choice of $d_1$, we set the other two dimensions $(d_2,d_3)$ such that $d_1\log R_1\approx d_2\log R_2\approx d_3\log R_3$. Recall that our theoretical analysis suggests a convergence rate $\tO(\sqrt{\log R_1/d_2d_3})$ for our estimator. Figure~\ref{fig:RMSE}a plots the recovery error versus the rescaled sample size $N_1=\sqrt{d_2d_3/\log R_1}$. We find that the RMSE decreases roughly at the rate of $1/N_1$. This is consistent with our theoretical result. It is observed that tensors with a higher number of blocks tend to yield higher recovery errors, as reflected by the upward shift of the curves as $\mR$ increases. Indeed, a higher $\mR$ means a higher intrinsic dimension of the problem, thus increasing the difficulty of the estimation. Similar behavior can be observed in the order-4 case from Figure~\ref{fig:RMSE}b, where the rescaled sample size is $N_2 = \sqrt{d_2d_3d_4/\log R_1}$.

\begin{figure}[H]
\centering
\includegraphics[width=.4\textwidth]{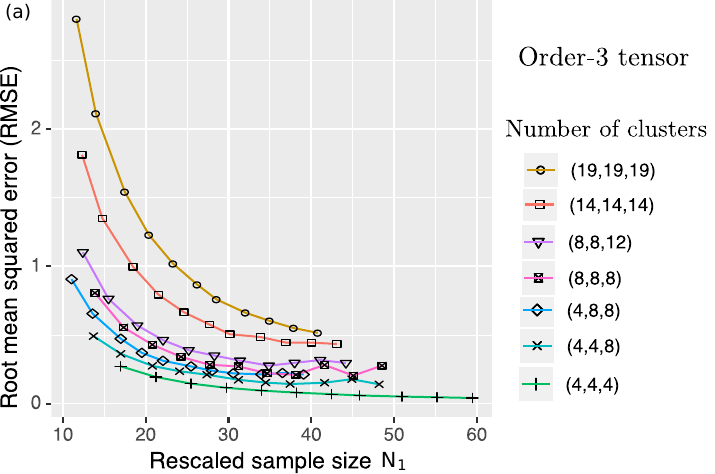}
\includegraphics[width=.4\textwidth]{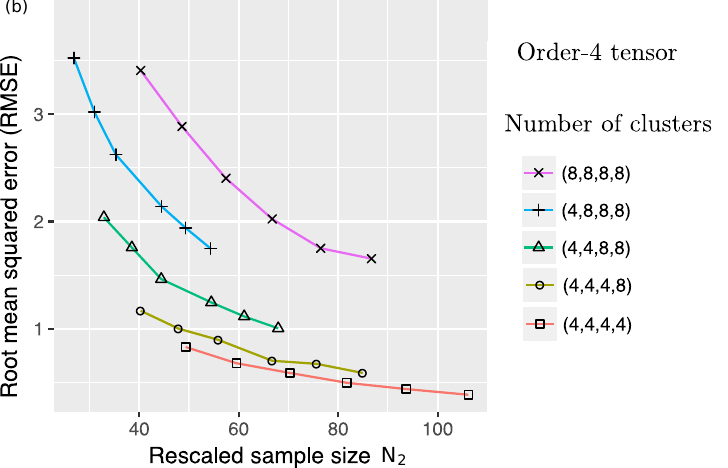}
\caption{\small Estimation error for block tensors with Gaussian noise. Each curve corresponds to a fixed clustering size $\mR$. (a) Average RMSE against rescaled sample size $N_1=\sqrt{d_2d_3/\log R_1}$ for order-3 tensors. (b) Average RMSE against rescaled sample size $N_2=\sqrt{d_2d_3d_4/\log R_1}$ for order-4 tensors. 
}\label{fig:RMSE}
\end{figure}

In the second experiment, we evaluate the selection performance of our BIC criterion~\eqref{eq:BIC}. Supplementary Table~\ref{tab:rank} reports the selected numbers of clusters under various combinations of dimension $\md$, clustering size $\mR$, and noise $\sigma$. We find that, for the case $\md=(40,40,40)$ and $\mR=(4,4,4)$, the BIC selection is accurate in the low-to-moderate noise setting. In the high-noise setting with $\sigma=12$, the selected number of clusters is slightly smaller than the true number, but the accuracy increases when either the dimension increases to $\md=(40,40,80)$ or the clustering size reduces to $\mR=(2,3,4)$. Within a tensor, the selection seems to be easier for shorter modes with smaller number of clusters. This phenomenon is to be expected, since shorter mode has more effective samples for clustering.

\subsection{Comparison with alternative methods}

Next, we compare our TBM method with two popular low-rank tensor estimation methods: (i) CP decomposition and (ii) Tucker decomposition. Following the literature~\cite{chi2018provable,hore2016tensor,kolda2008scalable}, we perform the clustering by applying the $k$-means to the resulting factors along each of the modes. We refer to such techniques as CP+$k$-means and Tucker+$k$-means. 

We generate noisy block tensors with five clusters on each of the modes, and then assess both the estimation and clustering performance for each method. Note that TBM takes a single shot to perform estimation and clustering simultaneously, whereas CP and Tucker-based methods separate these two tasks in two steps. We use the RMSE to assess the estimation accuracy and use the clustering error rate (CER) to measure the clustering accuracy. The CER is calculated using the disagreements (i.e., one minus rand index) between the true and estimated block partitions in the three-way tensor. For fair comparison, we provide all methods the true number of clusters. 

Figure~\ref{fig4}a shows that TBM achieves the lowest estimation error among the three methods. The gain in accuracy is more pronounced as the noise grows. Neither CP nor Tucker recovers the signal tensor, although Tucker appears to result in a modest clustering performance (Figure~\ref{fig4}b). One possible explanation is that the Tucker model imposes orthogonality to the factors, which make the subsequent $k$-means clustering easier than that for the CP factors. Figure~\ref{fig4}b-c shows that the clustering error increases with noise but decreases with dimension. This agrees with our expectation, as in tensor data analysis, a larger dimension implies a larger sample size.

\begin{figure}[H]
\centering
\includegraphics[width=\textwidth]{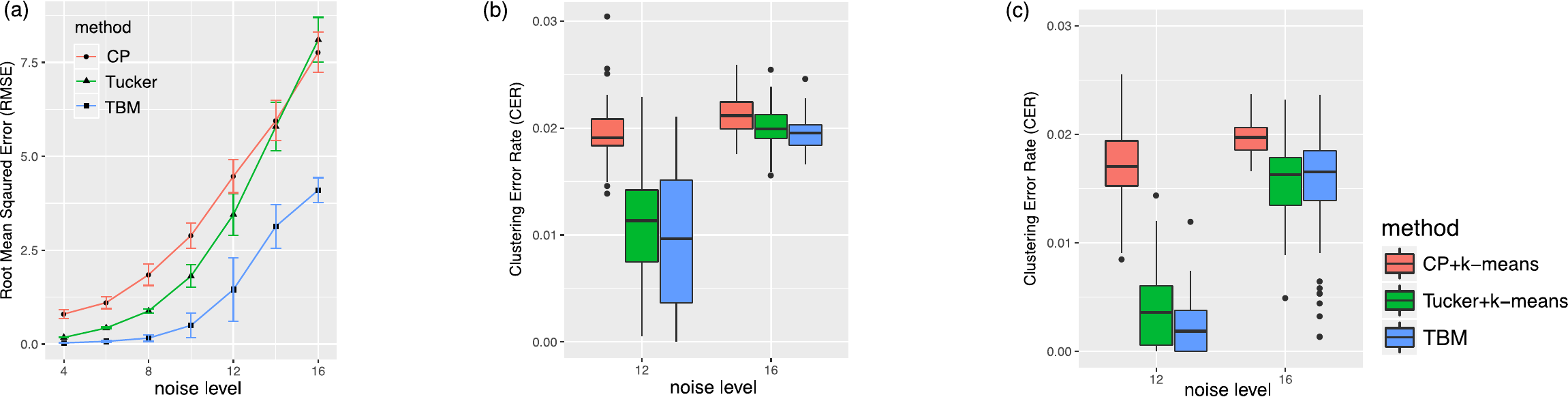}
\caption{\small Performance comparison in terms of RMSE and CER. (a) Estimation error against noise for tensors of dimension $(40,40,40)$. (b) Clustering error against noise for tensors of dimension $(40,40,40)$. (c) Clustering error against noise for tensors of dimension $(40,50,60)$.} \label{fig4}
\end{figure}

\textbf{Sparse case.} We then evaluate the performance when the signal tensor is sparse. The simulated model is the same as before, except that we generate block means from a mixture of zero mass and Uniform[-3,3], with probability $p$ (sparsity rate) and $1-p$ respectively. We generate noisy tensors of dimension $\md=(40,40,40)$ with varying levels of sparsity and noise. We utilize $\ell0$-penalized TBM and primarily focus on the selection accuracy. 
The performance is quantified via the the sparsity error rate, which is the proportion of entries that were incorrectly set to zero or incorrectly set to non-zero. We also report the proportion of true zero's that were correctly identified (correct zeros). 

Table~\ref{t5} reports the BIC-selected $\lambda$ averaged across 50 simulations. We see a substantial benefit obtained by penalization. The proposed $\lambda$ is able to guide the algorithm to correctly identify zero's, while maintaining good accuracy in identifying non-zero's. The resulting sparsity level is close to the ground truth. Supplementary Figure S1 shows the estimation error and sparsity error against $\sigma$ when $p=0.8$. Again, the sparse TBM outperforms the other methods. 

\begin{table}[H]
\centering
	 \resizebox{1\textwidth}{!}{%
	\begin{tabular}{c|c|c|ccc}
		\hline
		Sparsity ($p$)&Noise $(\sigma)$&BIC-selected $\lambda$&Estimated Sparsity Rate&Correct Zero Rate&Sparsity Error Rate  \\ \hline
		0.5&4&136.0(37.5)&\bf{0.55(0.04)}&\bf{1.00(0.02)}&\bf{0.06(0.03)}\\\hline
0.5&8&439.2(80.2)&\bf{0.58(0.06)}&\bf{0.94(0.08)}&0.15(0.07)\\\hline
0.8&8&458.0(63.3)&\bf{0.81(0.15)}&\bf{0.87(0.16)}&\bf{0.21(0.13)}\\\hline

	\end{tabular}
	}
\caption{\small Sparse TBM for estimating tensors of dimension $\md=(40,40,40)$. The reported statistics are averaged across 50 simulations with standard deviation given in parentheses. Number in bold indicates the ground truth is within 2 standard deviations of the sample average.}\label{t5}
\end{table}

\subsection{Real data analysis}

Lastly, we apply our method on two real datasets. We briefly summarize the main findings here; the detailed information can be found in the Supplements. 

The first dataset is a real-valued tensor, consisting of approximate 1 million expression values from 13 brain tissues, 193 individuals, and 362 genes~\cite{wang2017three}. We subtracted the overall mean expression from the data, and applied the $\ell0$-penalized TBM to identify important blocks in the resulting tensor. The top blocks exhibit a clear tissues $\times$ genes specificity (Supplementary Table~\ref{tab:gene}). In particular, the top over-expressed block is driven by tissues \{\emph{Substantia nigra, Spinal cord}\} and genes \{\emph{GFAP, MBP}\}, suggesting their elevated expression across individuals. In fact, \emph{GFAP} encodes filament proteins for mature astrocytes and \emph{MBP} encodes myelin sheath for oligodendrocytes, both of which play important roles in the central nervous system~\cite{o2015reference}. Our method also identifies blocks with extremely negative means (i.e.\ under-expressed blocks). The top under-expressed block is driven by tissues \{\emph{Cerebellum, Cerebellar Hemisphere}\} and genes \{\emph{CDH9, GPR6, RXFP1, CRH, DLX5/6, NKX2-1, SLC17A8}\}. The gene \emph{DLX6} encodes proteins in the forebrain development~\cite{o2015reference}, whereas cerebellum tissues are located in the hindbrain brain. The opposite spatial function is consistent with the observed under-expression pattern.

The second dataset we consider is the \emph{Nations} data~\cite{nickel2011three}. This is a $14\times 14 \times 56$ binary tensor consisting of 56 political relationships of 14 countries between 1950 and 1965. We note that 78.9\% of the entries are zero. Again, we applied the $\ell0$-penalized TBM to identify important blocks in the data. 
We found that the 14 countries are naturally partitioned into 5 clusters, two representing neutral countries \{\emph{Brazil, Egypt, India, Israel, Netherlands}\} and \{\emph{Burma, Indonesia, Jordan}\}, one eastern bloc \{\emph{China, Cuba, Poland, USSA}\}, and two western blocs, \{\emph{USA}\} and \{\emph{UK}\}. The relation types are partitioned into 7 clusters, among which the exports-related activities \{\emph{reltreaties, book translations, relbooktranslations, exports3, relexporsts}\} and NGO-related activities \{\emph{relintergovorgs, relngo, intergovorgs3, ngoorgs3}\} are two major clusters that involve the connection between neutral and western blocs. Other top blocks are described in the Supplement. 

We compared the goodness-of-fit of various clustering methods on the \emph{Brain expression} and \emph{Nations} datasets. Because the code of CoCo method~\cite{chi2018provable} is not yet available, we excluded it from our numerical comparison (See Section~\ref{sec:theory} for the theoretical comparison with CoCo). Table~\ref{tab:add} summarizes the proportion of variance explained by each clustering method:

\begin{table}[H]
\centering
 \resizebox{\textwidth}{!}{
\begin{tabular}{c|cccccc}
\hline
Dataset& TBM &TBM-sparse&  CP+$k$-means &Tucker+$k$-means & CoTeC~\cite{jegelka2009approximation} \\
\hline
Brain expression&0.856&0.855&0.576&0.434&0.849\\
Nations&0.439&0.433&0.324&0.253&0.419\\
\hline
\end{tabular}
}
\caption{Comparison of goodness-of-fit in the \emph{Brain} expression and \emph{Nations} datasets.}\label{tab:add}
\end{table}

Our method (TBM) achieves the highest variance proportion, suggesting that the entries within the same cluster are close (i.e., a good clustering). As expected, the sparse TBM results in a slightly lower proportion, because it has a lower model complexity at the cost of small bias. It is remarkable that the sparse TBM still achieves a higher goodness-of-fit than others. The improved interpretability with little loss of accuracy makes the sparse TBM appealing in applications.

\section{Conclusion}

We have developed a statistical setting for studying the tensor block model. Under the assumption that tensor entries are distributed with a block-specific mean, our estimator achieves a convergence rate $\tO( \sum_kd_k\log R_k)$ which is faster than previously possible. Our TBM method applies to a broad range of data distributions and can handle both sparse and dense data tensor. We demonstrate the benefit of sparse regularity in power of detection. In specific applications, prior knowledge may suggest other regularities for parameters. For example, in the multi-layer network analysis, sometimes it may be reasonable to impose symmetry on the parameters along certain modes. In some other applications, non-negativity of parameter values may be enforced. We leave these directions for future study.

\section*{Acknowledgements}
This research was supported by NSF grant DMS-1915978 and the University of Wisconsin-Madison, Office of the Vice Chancellor for Research and Graduate Education with funding from the Wisconsin Alumni Research Foundation.

\appendix

\renewcommand{\thefigure}{{S\arabic{figure}}}%
\renewcommand{\thetable}{{S\arabic{table}}}%
\renewcommand{\figurename}{{Supplementary Figure}}    
\renewcommand{\tablename}{{Supplementary Table}}    
\setcounter{figure}{0}   
\setcounter{table}{0}

\section{Proofs}
\subsection{Stochastic tensor block model}
The following property shows that Bernoulli distribution belongs to the sub-Gaussian family with a subgaussianity parameter $\sigma$ equal to $1/4$.
\begin{property}
Suppose $x \sim \text{Bernoulli}(p)$, then $x\sim \text{sub-Gaussian}({1\over 4})$.
\end{property}
\begin{proof} For all $\lambda \in \mathbb{R}$, we have
\[
\ln(\mathbb{E}(e^{\lambda(x-p)})=\ln\left(pe^{\lambda(1-p)} +(1-p)e^{-p\lambda}\right)=-p\lambda +\ln (1+pe^\lambda  -p)\leq {\lambda ^2\over 8}.
\]
Therefore $\mathbb{E}(e^{\lambda (x-p)})\leq e^{\lambda^2(1/4)/2}$.
\end{proof}

\subsection{Proof of Proposition~\ref{prop:factors}}
\begin{proof}
Let $\mathbb{P}_{\Theta}$ denotes the (either Gaussian or Bernoulli) tensor block model, where $\Theta=\tC \times_1\mM_1\times_2\cdots\times_K\mM_K$ parameterizes the mean tensor. Since the mapping $\Theta\mapsto\mathbb{P}_{\Theta}$ is one-to-one, $\Theta$ is identifiable. Now suppose that $\Theta$ can be decomposed in two ways, $\Theta=\Theta(\{\mM_k\}, \tC)=\Theta(\{\tilde \mM_k\}, \tilde \tC)$. Based on the Assumption~\ref{ass:core}, we have
\begin{equation}\label{eq:equality2}
\Theta=\tC \times_1\mM_1\times_2\cdots\times_K\mM_K=\tilde \tC \times_1\tilde \mM_1\times_2\cdots\times_K\tilde  \mM_K,
\end{equation}
where $\tC$, $\tilde \tC\in\mathbb{R}^{R_1\times \cdots \times R_K}$ are two irreducible cores, and $\mM_k,\tilde \mM_k\in\{0,1\}^{R_k\times d_k}$ are membership matrices for all $k\in[K]$. We will prove by contradiction that $\mM_k$ and $\tilde \mM_k$ induce the same partition of $[d_k]$, for all $k\in[K]$. 

Suppose the above claim does not hold. Then there exists a mode $k\in[K]$ such that the $\mM_k, \tilde \mM_k$ induce two different partitions of $[d_k]$. Without loss of generality, we assume $k=1$. The definition of partition implies that there exists a pair of indices $i\neq j$, $i,j\in[d_1]$, such that, $i,j$ belong to the same cluster based on $\mM_1$, but they belong to different clusters based on $\tilde \mM_1$. Let $\tA\neq \tB, \tA, \tB \subset[d_1]$ respectively denote the  clusters that $i$ and $j$ belong to, based on $\tilde \mM_1$. The left-hand side of \eqref{eq:equality2} implies
\begin{equation}\label{eq:cluster1}
\Theta_{i,i_2,\ldots,i_K}=\Theta_{j,i_2,\ldots,i_K},\quad \text{for all } (i_2,\ldots,i_K)\in[d_2]\times\cdots\times [d_K].
\end{equation}
 On the other hand, \eqref{eq:equality2} implies
\begin{equation}\label{eq:cluster2}
\Theta_{i,i_2,\ldots,i_K}=\Theta_{k,i_2,\ldots,i_K},\quad \text{for all } k\in \tA \text{ and all }(i_2,\ldots,i_K)\in[d_2]\times\cdots\times [d_K],
\end{equation}
and
\begin{equation}\label{eq:cluster3}
\Theta_{j,i_2,\ldots,i_K}=\Theta_{k,i_2,\ldots,i_K},\quad \text{for all } k\in \tB \text{ and all }(i_2,\ldots,i_K)\in[d_2]\times\cdots\times [d_K].
\end{equation}

Combining~\eqref{eq:cluster1}, \eqref{eq:cluster2} and \eqref{eq:cluster3}, we have
\begin{equation}\label{eq:same}
\Theta_{i,i_2,\ldots,i_K}=\Theta_{k,i_2,\ldots,i_K},\quad \text{for all } k\in \tA\cup \tB \text{ and all }(i_2,\ldots,i_K)\in[d_2]\times\cdots\times [d_K].
\end{equation}
Equation~\eqref{eq:same} implies that $\tA$ and $\tB$ can be merged into one cluster. This contradicts the irreducibility assumption of the core tensor $\tilde \tC$. Therefore, $\mM_1$ and $\tilde \mM_1$ induce a same partition of $[d_1]$, and thus they are equal up to permutation of cluster labels. The proof is now complete. 
\end{proof}

\subsection{Proof of Theorem~\ref{thm:mse}}
The following lemma is useful for the proof of Theorem~\ref{thm:mse}.
\begin{lemma} \label{lem:high}
Suppose $\tY=\trueT+\tE$ with $\trueT\in\tP$. Let $\hat \Theta=\arg\min_{\Theta \in \tP}\FnormSize{}{\hat \Theta-\tY}^2$ be the least-square estimator of $\trueT$. We have
\[
\FnormSize{}{\hat \Theta-\trueT}\leq 2\sup_{\mu \in { \tP-\tP'\over |\tP-\tP'|}}\langle \mu, \tE \rangle,
\]
\end{lemma}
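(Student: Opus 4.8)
The plan is to invoke the standard ``basic inequality'' for constrained least-squares estimation. The starting point is the optimality of $\hat\Theta$: since $\hat\Theta$ minimizes $\FnormSize{}{\Theta-\tY}^2$ over $\Theta\in\tP$ and the true mean tensor $\trueT$ also lies in $\tP$, we have
\[
\FnormSize{}{\hat\Theta-\tY}^2\leq \FnormSize{}{\trueT-\tY}^2.
\]

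Next I would substitute the model $\tY=\trueT+\tE$ and expand both sides. Writing $\hat\Theta-\tY=(\hat\Theta-\trueT)-\tE$ and expanding the squared Frobenius norm gives $\FnormSize{}{\hat\Theta-\trueT}^2-2\langle\hat\Theta-\trueT,\tE\rangle+\FnormSize{}{\tE}^2$ on the left, while the right-hand side is simply $\FnormSize{}{\tE}^2$ because $\trueT-\tY=-\tE$. Cancelling the common $\FnormSize{}{\tE}^2$ term leaves the cross-term inequality
\[
\FnormSize{}{\hat\Theta-\trueT}^2\leq 2\langle\hat\Theta-\trueT,\tE\rangle.
\]

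The final step is to normalize. If $\hat\Theta=\trueT$ the claimed bound holds trivially, so I would assume $\FnormSize{}{\hat\Theta-\trueT}>0$ and set $\mu=(\hat\Theta-\trueT)/\FnormSize{}{\hat\Theta-\trueT}$, a tensor of unit Frobenius norm. Since $\hat\Theta-\trueT$ is a difference of two members of $\tP$, the normalized direction $\mu$ belongs to the index set of the supremum in the statement. Dividing the previous inequality through by $\FnormSize{}{\hat\Theta-\trueT}$ yields $\FnormSize{}{\hat\Theta-\trueT}\leq 2\langle\mu,\tE\rangle$, and replacing the right-hand side by the supremum over all such $\mu$ completes the proof. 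This argument is essentially mechanical rather than obstacle-laden; the only points requiring care are handling the degenerate case $\hat\Theta=\trueT$ and verifying that $\mu$ is a legitimate element of the supremum's domain, both of which are immediate.
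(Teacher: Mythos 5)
Your proposal is correct and follows essentially the same route as the paper's proof: the basic inequality from optimality of $\hat\Theta$, expansion of the squared Frobenius norm to isolate the cross term $2\langle\hat\Theta-\trueT,\tE\rangle$, and normalization by $\FnormSize{}{\hat\Theta-\trueT}$ to land in the set ${\tP-\tP'\over|\tP-\tP'|}$. Your explicit treatment of the degenerate case $\hat\Theta=\trueT$ is a minor point of care the paper omits, but otherwise the two arguments are identical.
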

where $\tP-\tP'=\{\Theta-\Theta'\colon \Theta, \Theta'\in \tP\}$ and $\tS/|\tS|=\{s/\vnormSize{}{s}\colon s\in \tS\}$.
\begin{proof} Based on the definition of least-square estimator, we have
\begin{equation}\label{eq:least}
\FnormSize{}{\hat \Theta-\tY}^2\leq \FnormSize{}{\trueT-\tY}^2.
\end{equation}
Combining~\eqref{eq:least} with the fact 
\begin{equation}
\begin{split}
\FnormSize{}{\hat \Theta -\tY}^2&=\FnormSize{}{\hat \Theta-\trueT+\trueT-\tY}^2\\
&=\FnormSize{}{\hat \Theta-\trueT}^2+\FnormSize{}{\trueT-\tY}^2+2\langle \hat \Theta-\trueT, \trueT-\tY\rangle,
\end{split}
\end{equation}
yields
\[
\FnormSize{}{\hat \Theta-\trueT}^2\leq 2\langle \hat \Theta-\trueT, \tY-\trueT\rangle=2\langle \hat \Theta-\trueT, \tE\rangle.
\]
Dividing each side by $\FnormSize{}{\hat \Theta-\trueT}$, we have
\[
\FnormSize{}{\hat \Theta-\trueT}\leq 2\left \langle {\hat \Theta-\trueT \over \FnormSize{}{\hat \Theta-\trueT}}, \tE\right \rangle.
\]
The desired inequality follows by noting ${\hat \Theta-\trueT \over \FnormSize{}{\hat \Theta-\trueT}}\in {\tP-\tP'\over |\tP-\tP'|}$. 
\end{proof}

\begin{proof}[Proof of Theorem~\ref{thm:mse}]
To study the performance of the least-square estimator $\hat \Theta$, we need to introduce some additional notation. We view the membership matrix $\mM_k$ as an onto function $\mM_k\colon [d_k]\mapsto [R_k]$. With a little abuse of notation, we still use $\mM_k$ to denote the mapping function and write $\mM_k\in R_k^{d_k}$ by convention. We use $\mM=\{\mM_k\}_{k\in[K]}$ to denote the collection of $K$ membership matrices, and write $\tM=\{\mM \colon \text{$\mM$ is the collection of membership matrices $\mM_k$'s}\}$. For any set $J$, $|J|$ denotes its cardinality. Note that $|\tM|\leq \prod_k R_k^{d_k}$, because each $\mM_k$ can be identified by a partition of $[d_k]$ into $R_k$ disjoint non-empty sets. 

For ease of notation, we define $d=\prod_k d_k$ and $R=\prod_k R_k$. We sometimes identify a tensor in $\mathbb{R}^{d_1\times \cdots \times d_K}$ with a vector in $\mathbb{R}^d$. By the definition of the parameter space $\tP$, the element $\Theta\in \tP$ can be equivalently identified by $\Theta=\Theta(\mM, \mC)$, where $\mM\in \tM$ is the collection of $K$ membership matrices and $\mC=\text{vec}(\tC)\in\mathbb{R}^R$ is the core tensor. Note that, for a fixed clustering structure $\mM$, the space consisting of $\Theta=\Theta(\mM,\cdot)$ is a linear space of dimension $R$.

Now consider the least-square estimator
\[
\hat \Theta=\argmin_{\Theta\in\tP}\{-2\langle \tY, \Theta\rangle+\FnormSize{}{\Theta}^2 \}=\argmin_{\Theta\in\tP}\{\FnormSize{}{\tY-\Theta}^2 \}.
\]
Based on the Lemma~\ref{lem:high},
\begin{equation}
\begin{split}
\FnormSize{}{\hat \Theta-\trueT}&\leq 2\sup_{\Theta\in\tP} \sup_{\Theta'\in \tP} \Big\langle {\Theta-\Theta'\over \FnormSize{}{\Theta-\Theta'}}, \tE \Big\rangle\\
&\leq 2\sup_{\mM, \mM' \in \tM}\sup_{\mC, \mC' \in \mathbb{R}^R}\Big\langle {\Theta(\mM, \mC)-\Theta'(\mM', \mC')\over \FnormSize{}{\Theta(\mM, \mC)-\Theta' (\mM', \mC')}}, \tE \Big\rangle.
\end{split}
\end{equation}

By union bound, we have, for any $t>0$,
\begin{equation}
\begin{split}
\mathbb{P}\left(\FnormSize{}{\hat \Theta-\trueT} > t\right)&\leq \mathbb{P}\left(\sup_{\mM, \mM'\in \tM}\sup_{\mC, \mC'\in \mathbb{R}^R} \left|\Big\langle {\Theta(\mM,\mC)-\Theta'(\mM',\mC')\over \FnormSize{}{\Theta(\mM,\mC)-\Theta'(\mM',\mC')}}, \ \tE\Big\rangle\right|>{t\over 2}\right)\\
&\leq \sum_{\mM, \mM'\in \tM}\mathbb{P}\left(\sup_{\mC'\in \mathbb{R}^R}\sup_{\mC\in \mathbb{R}^R}\left|\Big\langle {\Theta(\mM,\tC)-\Theta'(\mM',\tC)\over \FnormSize{}{\Theta(\mM,\tC)-\Theta'(\mM',\tC)}},\ \tE\Big\rangle \right|\geq {t\over 2}\right)\\
&\leq |\tM|^2C_1^R\exp\left(-{C_2 t^2\over 32\sigma^2}\right)\\
&=\exp\left(2\sum_kd_k\log R_k+C_1\prod_k R_k-{C_2t^2\over 32\sigma^2}\right),
\end{split}
\end{equation}
for two universal constants $C_1, C_2>0$. Here the third line follows from~\cite{rigollet2015high} (Theorem 1.19) and the fact that $\Theta=\Theta(\mM,\cdot)$ lies in a linear space of dimension $R$. The last line uses $|\tM|\leq \prod_k R_k^{d_k}$ and $R=\prod_k R_k$. Choosing $t=C\sigma\sqrt{\prod_k R_k+\sum_k d_k\log R_k}$ yields the desired bound. 
\end{proof}

\subsection{Proof of Theorem~\ref{thm:mcr}}
First we give a list of notation used in the proof. For ease of notation, we allow the basic arithmetic operators ($+, -, \geq$, etc) to be applied to pairs of vectors in an element-wise manner. 

\subsubsection{Notations}\label{sec:notations}

$\mM_{k}= \entry{m_{ir}^{(k)}} \in \{0,1\}^{d_k\times R_k}$: the mode-$k$ membership matrix. The element $m_{ir}^{(k)}=1$ if and only if the $i$th slide in mode $k$ belongs to the $r$th cluster.

$\mM_{k,\text{true}}$, $\hat \mM_{k} \in \{0,1\}^{d_k\times R_k}$: the true and estimated mode-$k$ cluster membership matrices, respectively.

$\mp^{(k)} = \entry{p^{(k)}_r} \in [0,1]^{R_k}$: the marginal cluster proportion vector listing the relative cluster sizes along the mode $k$. The element $p_{r}^{(k)}=\frac{1}{d_k}\sum_{i=1}^{d_k}\mathds{1}\{m_{ir}^{(k)}=1\}$ denotes the proportion of the $r$th cluster. The cluster proportion vector $\mp^{(k)}=\mp^{(k)}(\mM_k)$ can be viewed as a function of $\mM_{k}$.

$ \mp^{(k)}_{\text{true}}$, $\hat \mp^{(k)} \in [0,1]^{R_k}$: the true and estimated mode-$k$ cluster proportion vectors, respectively.

$\mD^{(k)} = \entry{D^{(k)}_{rr'}}\in [0,1]^{R_k\times R_k}$: the mode-$k$ confusion matrix between clustering $\mM_{k,\text{true}}$ and $\hat \mM_{k}$. The entries in the confusion matrix is $D_{rr'}^{(k)}=\frac{1}{d_k}\sum_{i=1}^{d_k}\mathbb{I}\{m_{ir,\text{true}}^{(k)}=\hat m_{ir'}^{(k)}=1\}$. The confusion matrix $\mD^{(k)}={1\over d_k}\mM^T_{k,\text{true}}\hat \mM_k$ is a function of $\mM_{k,\text{true}}$ and $\hat \mM_k$.

$\mathcal{J}_{\tau}=\{(\mM_1,\ldots,\mM_K): \mp^{(k)}(\mM_k)\geq \tau \text{ for all } k\in[K]\}$: the set of all possible partitions that satisfy the marginal non-degenerating assumption. 

$\mathcal{I} \subset 2^{[d_1]}\times \cdot\cdot\cdot \times 2^{[d_K]}$: the set of blocks that satisfy the marginal non-degenerating assumption for all $k\in[K]$;

$L=\inf\{|I|: I\in\mathcal{I}\}$: the minimum block size in $\tI$. 

$\mnormSize{}{\tA}=\max_{r_1,\ldots,r_K}|a_{r_1,\ldots,r_K}|$ for any tensor $\tA=\entry{a_{i_1,\ldots,i_K}}\in\mathbb{R}^{R_1\times\ldots \times R_K}$.

$f(x)=x^2$: the quadratic objective function. \\

\begin{remark}
By definition, the confusion matrix $\mD^{(k)}$ satisfies the following two properties:
\begin{enumerate}
\item $\mD^{(k)}\boldsymbol{1}=\mp^{(k)}_{\text{true}}$,\ $(\mD^{(k)})^T\boldsymbol{1} =\hat \mp^{(k)}$.
\item The estimated clustering matches the true clustering if and only if $\mD^{(k)}$ equals to the diagonal matrix up to permutation. 
\end{enumerate}
\end{remark}

\subsubsection{Auxiliary Results}

Recall that the objective function in our tensor block model is 
\begin{align}\label{seq:opt2}
&f(\tC, \{ \mM_k\})=\langle \tY,\ \Theta\rangle -\frac{\FnormSize{}{\Theta}^2}{2},\\
&\text{where } \Theta=\tC\times_1\mM_1\times_2\cdots\times_K \mM_K,
\end{align}
where $\tY\in\mathbb{R}^{d_1\times \cdots \times d_K}$ is the data, $\tC$ is the core tensor of interest, and $\{\mM_k\}$ is the membership matrices of interest. Without loss of generality, we will work with the scaled objective ${2\over \prod_k d_k}f(\tC, \{\mM_k\})$. With a little abuse of notation, we still denote the scaled function as $f(C,\{\mM_k\})$.

We will prove that, if there is non-negligible mismatch between $\{\hat{ \mM}_k\}$ and $\{\mM_{k,\text{true}}\}$, then $\{\hat{ \mM}_k\}$ cannot be the optimizer to~\eqref{seq:opt2}. To show this, we investigate the objective values at the global optimizer vs.\ at the true parameter. The deviation between these two values comes from two aspects: the label assignments (i.e., the estimation of $\{\mM_k\}$) and the estimation of the core tensor. In what follows, we tease apart these two aspects.
 
 \begin{enumerate}
\item First, suppose the partitions $\{\mM_k\}$ are given, which are not necessarily equal to $\{\mM_{k,\text{true}}\}$. We now assess the stochastic error due to estimation of $\tC$, conditional on $\{\mM_k\}$. In such a case, the core $\hat \tC = \argmin_{\tC} f(\tC,\{\mM_k\})$ can be solved explicitly. Specifically, the optimizer $\hat \tC=\entry{\hat c_{r_1,\ldots,r_K}}$ consists of the sample averages of each tensor block, where
\begin{align}\label{eq:core}
\displaystyle \hat c_{r_1,\ldots,r_K}&=\hat c_{r_1,\ldots,r_K}(\{\mM_k\})\\
&= {1\over d_1\cdots d_K}\frac{1}{p_{r_1}^{(1)}\cdots p_{r_K}^{(K)}}\left[\tY\times_1\mM^T_1\times_2\cdots\times_K \mM^T_K\right]_{r_1,\ldots,r_K}
\end{align}
where the marginal cluster proportion $p^{(k)}_{r_k}$ is induced by the clustering $\mM_k$. 

Define a new cost function $F(\mM_1,\ldots,\mM_K)= f(\hat \tC, \mM_1,\ldots,\mM_k)$, where $\hat \tC=\entry{\hat c_{i_1,\ldots,i_K}}$ is expressed in~\eqref{eq:core}. A straightforward calculation shows that the function $F(\cdot)$ has the form
\begin{equation}\label{eq:F}
F(\mM_1,\ldots,\mM_K) = \sum_{r_1,\ldots,r_K}\left(\prod_k p^{(k)}_{r_k}\right) \hat c_{r_1,\ldots,r_K}^2.
\end{equation}

Let $G(\mM_1,\ldots,\mM_k)=\mathbb{E}(F(\mM_1,\ldots,\mM_K))$, where the expectation is taken with respect to the $\hat \tC=\entry{ \hat c_{r_1,\ldots,r_K}}$. We have that  
\begin{equation}\label{eq:G}
G(\mM_1,\ldots,\mM_K)=\sum_{r_1,\ldots,r_K} \left(\prod_k p^{(k)}_{r_k}\right) \mu^2_{r_1,\ldots,r_K},
\end{equation}
where
\[
\mu_{r_1,\ldots,r_K}=\mathbb{E}(\hat c_{r_1,\ldots,r_K})={1\over \prod_k p^{(k)}_{r_k}}\left[ \tC\times_1 \mD^{(1)^T} \times_2\cdots\times_K \mD^{(K)^T}\right]_{r_1,\ldots,r_K}
\]
is the expectation of the average of $y_{i_1,\ldots,i_K}$ over the tensor block indexed by $(r_1,\ldots,r_K)$, and $\mD^{(k)}=\entry{D^{(k)}_{i_kj_k}}$ is the confusion matrix between $\mM_{k,\text{true}}$ and $\mM_k$.

The deviation $F(\mM_1,\ldots,\mM_K)-G(\mM_1,\ldots,\mM_K)$ quantifies the stochastic error caused by the core tensor estimation. We sometimes use $G(\mD^{(1)},\ldots,\mD^{(K)})$ to denote $G(\mM_1,\ldots,\mM_k)$ if we want to emphasize the error caused by mismatch in label assignments. 
Based on~\eqref{eq:F} and~\eqref{eq:G}, we define a residual tensor for the block means:
\begin{align}\label{eq:residual}
&\tR(\mM_1,\ldots,\mM_K)=\entry{R_{r_1,\ldots,r_K}},\text{where}\\
&R_{r_1,\ldots,r_K}= \hat c_{r_1,\ldots,r_K}-\mu_{r_1,\ldots,r_K} ,\quad \text{for all }(r_1,\ldots,r_K)\in[R_1]\times\cdots\times[R_K].
\end{align}
Note that, conditional on $\{\mM_k\}$, the entries $R_{r_1,\ldots,r_K}$ in the residual tensor are independent sub-Gaussian with parameter depending on the size of the $(r_1,\ldots,r_K)$th block.

\item Next, we free $\{\mM_k\}$ and quantify the total stochastic deviation. Note that optimizing~\eqref{seq:opt2} is equivalent to optimizing~\eqref{eq:F} with respect to $\{\mM_k\}$. So the least-square estimator of $\{\mM_k\}$ can be expressed as
\begin{equation}
   (\hat{\mM}_1,\ldots,\hat{\mM}_K) = \displaystyle\argmax_{(\mM_1,\ldots,\mM_K)\in \mathcal{J}_\tau}F(\mM_1,\ldots,\mM_K).
\end{equation}
The expectation (with respect to $\hat \tC$) of the objective value at the true parameter is
\[
G(\mM_{1,\text{true}},\ \ldots,\mM_{K,\text{true}})=\sum_{r_1,\ldots,r_K} p^{(1)}_{r_1,\text{true}} \cdots p^{(K)}_{r_K,\text{true}}c^2_{r_1,\ldots,r_K,\text{true}}
\]
We use $ G(\mD^{(1)},\ldots,\mD^{(K)})-G(\mM_{1,\text{true}},\ \ldots,\mM_{K,\text{true}})$ to measure the stochastic deviation caused by mismatch in label assignments; and use $F(\mM_1,\ldots,\mM_K)-G(\mD^{(1)},\ldots,\mD^{(K)})$ to measure stochastic deviation caused by estimation of core tensors.

\end{enumerate}

The following lemma shows that, if there is non-negligible mismatch between $\mM_{k,\text{true}}$ and $\hat {\mM}_k$, then $\hat {\mM}_k$ cannot be the global optimizer to the objective function~\eqref{seq:opt2}.\\

\begin{lemma}\label{1}
Consider partitions that satisfying $(\mM_1,\ldots,\mM_K)\in \mathcal{J}_\tau$, for some $\tau>0$. 
Define the minimal gap between block means $\delta^{(k)}=\min_{r_k\neq r_k'}$ $\max_{r_1,\ldots,r_{k-1},r_{k+1},\ldots,r_K}(c_{r_1,\ldots,r_k,\ldots,r_K}-c_{r_1,\ldots,r_k',\ldots,r_K})^2>0$ and assume $\delta_{\min}=\min_k\delta^{(k)}>0$. For any fixed $\varepsilon>0$, suppose $\text{MCR}(\mM_{k,\text{true}}, \hat \mM_k)\geq \varepsilon $ for some $k\in[K]$. Then, we have
\begin{equation*}
    G(\mD^{(1)},\ldots,\mD^{(K)})-G(\mM_{1,\text{true}},\ \ldots,\mM_{K,\text{true}}) \leq -\frac{1}{4}\varepsilon\tau^{K-1}\delta_{\min},
\end{equation*}
where $\mD^{(k)}$ is the confusion matrix between $\mM_{k,\text{true}}$ and $\hat \mM_k$.
\end{lemma}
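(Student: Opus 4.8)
The plan is to prove that the population objective $G$ of \eqref{eq:G} is maximized, over all admissible clusterings, at the true partition, and to lower-bound the gap it suffers from an $\varepsilon$-sized mode-$k$ error. Writing $\tilde c_{r_1,\ldots,r_K}=[\tC\times_1(\mD^{(1)})^{\top}\times_2\cdots\times_K(\mD^{(K)})^{\top}]_{r_1,\ldots,r_K}$ and $\hat p^{(k)}_{r_k}=\sum_{s}D^{(k)}_{s,r_k}$, I would first recast \eqref{eq:G} as $G(\mD^{(1)},\ldots,\mD^{(K)})=\sum_{r_1,\ldots,r_K}\tilde c^{\,2}_{r_1,\ldots,r_K}\big/\prod_k\hat p^{(k)}_{r_k}$. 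At the true partition every $\mD^{(k)}$ is diagonal, so this collapses to $\sum_{r_1,\ldots,r_K}\big(\prod_k p^{(k)}_{r_k,\text{true}}\big)c^{\,2}_{r_1,\ldots,r_K}$; call this value $G_{\text{true}}$, which is exactly the quantity we must compare against.

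The first key step is a decoupling that reduces the problem to a single misclassified mode. Fixing all modes except one, say mode $j$, and grouping the sum over the mode-$j$ index, each inner contribution takes the form $\big(\sum_{s}b_s D^{(j)}_{s,r}\big)^2\big/\sum_s D^{(j)}_{s,r}$ for coefficients $b_s$ that do not involve $\mD^{(j)}$. By Cauchy--Schwarz, $\big(\sum_s b_s D^{(j)}_{s,r}\big)^2\le\big(\sum_s b_s^2 D^{(j)}_{s,r}\big)\big(\sum_s D^{(j)}_{s,r}\big)$, so each term is at most $\sum_s b_s^2 D^{(j)}_{s,r}$; summing over $r$ and using that the row sums of $\mD^{(j)}$ equal $\mp^{(j)}_{\text{true}}$ shows that replacing $\mD^{(j)}$ by the diagonal (i.e.\ correct) confusion can only increase $G$. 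Applying this to every mode other than $k$ gives $G(\mD^{(1)},\ldots,\mD^{(K)})\le G^\ast$, where $G^\ast$ evaluates $G$ with mode $k$ left at $\mD^{(k)}$ and all other modes set to the truth; since the true partition lies in $\mathcal{J}_\tau$, all intermediate clusterings remain admissible. It therefore suffices to lower-bound $G_{\text{true}}-G^\ast$, the drop caused by mode $k$ alone.

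Take $k=1$. Because only mode $1$ is wrong, the gap factorizes as a nonnegative $\tau$-weighted sum over the remaining indices, $G_{\text{true}}-G^\ast=\sum_{r_2,\ldots,r_K}\big(\prod_{j\ge2}p^{(j)}_{r_j,\text{true}}\big)\big[\Phi_{\text{true}}(r_2,\ldots,r_K)-\Phi(r_2,\ldots,r_K)\big]$, where $\Phi_{\text{true}}=\sum_{r_1}p^{(1)}_{r_1,\text{true}}c^{\,2}_{r_1,r_2,\ldots,r_K}$ and $\Phi=\sum_{r_1}\big(\sum_{s_1}c_{s_1,r_2,\ldots,r_K}D^{(1)}_{s_1,r_1}\big)^2/\hat p^{(1)}_{r_1}$. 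The conservation identity $\sum_{r_1}\sum_{s_1}c_{s_1,r_2,\ldots,r_K}D^{(1)}_{s_1,r_1}=\sum_{s_1}p^{(1)}_{s_1,\text{true}}c_{s_1,r_2,\ldots,r_K}$ turns each bracket into a genuine within-cluster variance, $\Phi_{\text{true}}-\Phi=\sum_{r_1}\sum_{s_1}D^{(1)}_{s_1,r_1}(c_{s_1,r_2,\ldots,r_K}-\bar c_{r_1})^2\ge0$, with $\bar c_{r_1}$ the confusion-weighted slice mean of cluster $r_1$. Since $\text{MCR}(\mM_{1,\text{true}},\hat\mM_1)\ge\varepsilon$, there are an estimated cluster $r^\ast$ and true clusters $a\neq a'$ with $D^{(1)}_{a,r^\ast}\ge\varepsilon$ and $D^{(1)}_{a',r^\ast}\ge\varepsilon$; retaining only these two terms of the $r^\ast$ summand and minimizing $w(x-c_a)^2+w'(x-c_{a'})^2$ over $x$ gives, for every slice, $\Phi_{\text{true}}-\Phi\ge\frac{D^{(1)}_{a,r^\ast}D^{(1)}_{a',r^\ast}}{D^{(1)}_{a,r^\ast}+D^{(1)}_{a',r^\ast}}(c_{a,r_2,\ldots}-c_{a',r_2,\ldots})^2\ge\frac{\varepsilon}{2}(c_{a,r_2,\ldots}-c_{a',r_2,\ldots})^2$, the last step because $ww'/(w+w')$ is increasing in each argument and $w,w'\ge\varepsilon$.

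To finish, I would invoke the definition of $\delta^{(1)}$ to pick a slice $(r_2^\ast,\ldots,r_K^\ast)$ with $(c_{a,r_2^\ast,\ldots}-c_{a',r_2^\ast,\ldots})^2\ge\delta^{(1)}\ge\delta_{\min}$ for the fixed pair $(a,a')$, keep only that term of the nonnegative sum, and bound its weight by $\prod_{j\ge2}p^{(j)}_{r_j^\ast,\text{true}}\ge\tau^{K-1}$ via non-degeneracy. This yields $G_{\text{true}}-G^\ast\ge\tfrac12\varepsilon\tau^{K-1}\delta_{\min}$, and since $G(\mD^{(1)},\ldots,\mD^{(K)})\le G^\ast$ the same lower bound holds for $G_{\text{true}}-G(\mD^{(1)},\ldots,\mD^{(K)})$, which already implies the stated bound. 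I expect the decoupling step to be the main obstacle: it is what legitimizes handling the $K$ modes one at a time and what produces the $\tau^{K-1}$ factor, and it requires checking that the Cauchy--Schwarz correction respects the non-degeneracy constraint at every stage. By contrast, the variance identity and the two-point inequality are routine once the reduction is available.
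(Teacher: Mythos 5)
Your proof is correct, but it takes a genuinely different route from the paper's. The paper works with the full confusion matrices all at once: for each estimated block it lower-bounds the Jensen gap between the $\mD$-weighted average of the squared core entries and the square of the weighted average, via a Taylor expansion of $x\mapsto x^2$ around the block mean $\mu_{r_1,\ldots,r_K}$ that isolates the two confused true clusters $a_1,a_1'$ and lumps all remaining mass at a single point $c_*$; it then applies $a^2+b^2\geq (a+b)^2/2$ to extract the factor $\tfrac14\min\{D^{(1)}_{a_1r_1},D^{(1)}_{a_1'r_1}\}\geq\tfrac{\varepsilon}{4}$, and finally sums over $(r_2,\ldots,r_K)$ using that the row sums of each $\mD^{(j)}$ equal the true proportions, which are at least $\tau$. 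You instead first prove a monotonicity (decoupling) step --- correcting any single mode's confusion matrix can only increase $G$, by Cauchy--Schwarz --- which reduces the problem to one misclassified mode; there the gap becomes an exact within-cluster variance identity, and the two-point weighted-variance minimum $\frac{ww'}{w+w'}(c_a-c_{a'})^2\geq\frac{\varepsilon}{2}(c_a-c_{a'})^2$ does the rest, after which you keep a single maximizing slice whose true weight is at least $\tau^{K-1}$. Both arguments are sound and both produce the $\tau^{K-1}$ factor (the paper by summing $\prod_{j\geq 2}D^{(j)}_{a_jr_j}$ over the estimated labels, you from the true proportions of one slice); your version avoids the Taylor expansion and the auxiliary point $c_*$, is exact where the paper's is an inequality, and in fact yields the sharper constant $\tfrac12\varepsilon\tau^{K-1}\delta_{\min}$, which of course implies the stated $\tfrac14$ bound. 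One cosmetic remark: the admissibility of the ``intermediate clusterings'' you mention is not actually needed --- the Cauchy--Schwarz comparison is a pure inequality between values of $G$ and does not require the intermediate configurations to lie in $\mathcal{J}_\tau$; only the true partition's proportions being bounded below by $\tau$ is used at the end.
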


\begin{proof}[Proof of Lemma~\ref{1}]
For ease of notation, we drop the subscript ``true'' and simply write $p^{(k)}_{r_k}$, $\mM_k$, $\tC$, etc. as the true parameters. The corresponding estimators are denoted as $\hat p^{(k)}_{r_k}$, $\hat \mM_k$, etc.
Recall that 
\begin{equation}\label{eq:D}
G(\mD^{(1)},\ldots,\mD^{(K)})=\sum_{r_1,\ldots,r_K} \hat p^{(1)}_{r_1}\cdots \hat p^{(K)}_{r_K}  \mu^2_{r_1,\ldots,r_K},
\end{equation}
where  $\hat p^{(k)}_{r_k}$ is the marginal cluster proportion induced by $\hat{\mM}_k$, and $\mu_{r_1,\ldots,r_K}$ is the expected block mean induced by $\hat{\mM}_k$:
\[
\mu_{r_1,\ldots,r_K}=\mu_{r_1,\ldots,r_K}(\hat{\mM}_1,\ldots,\hat{\mM}_K)={1\over \prod_k \hat p^{(k)}_{r_k}}\left[ \tC\times_1 \mD^{(1)^T} \times_2\cdots\times_K \mD^{(K)^T}\right]_{r_1,\ldots,r_K}.
\]

We provide the proof for $k=1$. The proof for other $k\in[K]$ is similar. The condition on MCR implies that, there exist some $r_1\in[R_1]$ and some $a_1\neq a_1'\in[R_1]$, such that $\min\{D_{a_1r_1}^{(1)}, D_{a_1'r_1}^{(1)}\}\geq \varepsilon$. Because the minimal gap between tensor block means are non-zero, we choose $(a_2,\ldots,a_K)$ such that $(c_{a_1,a_2,\ldots,a_K}-c_{a_1',a_2,\ldots,a_K})^2 = \displaystyle\max_{a_2,\ldots,a_K}(c_{a_1,a_2,\ldots,a_K}-c_{a_1',a_2,\ldots,a_K})^2>0$. 

Let $\tN=\entry{c^2_{a_1,\ldots,a_K}}\in \mathbb{R}^{R_1\times\cdots\times R_K}$ be the quadratic loss evaluated at block, $W_{r_1,\ldots,r_K} = \prod_k \hat p^{(k)}_{r_k} >0$ the size for the block indexed by $(r_1,\ldots,r_K)$. For ease of notation, we drop the subscript ${(r_1,\ldots,r_K)}$ and simply write $W$.

Based on the convexity of quadratic loss, there exists $c_*\in\mathbb{R}$ such that the weighted quadratic loss can be expressed as
\begin{align}
&[\tN\times_1\mD^{(1)^T}\times_2\cdots\times_K \mD^{(K)^T}]_{r_1,\ldots,r_K}\\
=&\ D_{a_1r_1}^{(1)}D_{a_2r_2}^{(2)} \cdots D_{a_Kr_K}^{(K)}c^2_{a_1,a_2,\ldots.,a_K}+D_{a_1'r_1}^{(1)}D_{a_2r_2}^{(2)} \cdots D_{a_Kr_K}^{(K)}c^2_{a_1',a_2,\ldots,a_K}+\\
&(W-D_{a_1r_1}^{(1)}D_{a_2r_2}^{(1)}\cdots D_{a_Kr_K}^{(K)}-D_{a_1'r_1}^{(1)}D_{a_2r_2}^{(1)}\cdots D_{a_Kr_K}^{(K)})c^2_*.
\end{align}

Recall that $\mu_{r_1,\ldots,r_K} ={1\over W} [\mathcal{C}\times_1\mD^{(1)^T}\times_2\cdots\times_K \mD^{(K)^T}]_{r_1,\ldots,r_K}$ is the $(r_1,\ldots,r_k)$-th weighted entry of the block means. 
By the Taylor expansion of quadratic loss function at $\mu_{r_1,\ldots,r_K}$, we have
\begin{align}\label{inq:taylor}
&{1\over W} [\tN\times_1\mD^{(1)^T}\times_2\cdots \times_K\mD^{(K)^T}]_{r_1,\ldots,r_K}-\mu^2_{r_1,\ldots,r_K}\notag \\
\geq &\ {1\over 2W} D_{a_1r_1}^{(1)}D_{a_2r_2}^{(2)}\cdots D_{a_Kr_K}^{(K)} (c_{a_1,a_2,\ldots,a_K}-\mu_{r_1,\ldots,r_K})^2+\notag \\
 &{1\over 2W} D_{a'_1,r_1}^{(1)}D_{a_2r_2}^{(2)}\cdots D_{a_Kr_K}^{(K)}(c_{a'_1,a_2,\ldots,a_K}-\mu_{r_1,\ldots,r_K})^2 +\notag \\
&\frac{1}{2W} \left(W-D_{a_1r_1}^{(1)}D_{a_2,r_2}^{(2)}\cdots D_{a_Kr_K}^{(K)}-D_{a'_1,r_1}^{(1)}D_{a_2,r_2}^{(2)}\cdots D_{a_Kr_K}^{(K)}\right)(c_*-\mu_{r_1,\ldots,r_K})^2.
\end{align}

Combining \eqref{inq:taylor} and  basic inequality $(a^2+b^2)\geq {1\over 2}(a+b)^2$ gives
\begin{align}\label{inq:6}
&\ {1\over W}[\tN\times_1\mD^{(1)^T}\times_2\cdots \times_K\mD^{(K)^T}]_{r_1,\ldots,r_K}-\mu_{r_1,\ldots,r_K}^2\notag \\
   \geq & \frac{1}{4W}\min\left\{D_{a_1r_1}^{(1)},\ D_{a_1'r_1}^{(1)}\right\}D_{a_2r_2}^{(2)}\cdots D_{a_Kr_K}^{(K)}(c_{a_1,\ldots,a_K}-c_{a_1',\ldots,a_K})^2\notag \\
   \geq &\ \frac{\varepsilon D_{a_2r_2}^{(2)}\cdots D_{a_Kr_K}^{(K)}}{4W}(c_{a_1,a_2,\ldots,a_K}-c_{a_1',a_2,\ldots,a_K})^2.
\end{align}
The inequality \eqref{inq:6} only holds for a certain $r_1\in[R_1]$. For any other $r'_1\in [R_1]/\{r_1\}$, by Jensen's inequality we have
\begin{equation} 
    \frac{1}{W}[\tN\times_1\mD^{(1)^T}\times_2\cdots \times_K\mD^{(K)^T}]_{r'_1,\ldots,r_K}-\mu^2_{r'_1,\ldots,r_K} \geq 0.
    \label{inq:7}
\end{equation}

Combining the sum of \eqref{inq:6} and \eqref{inq:7} over $(r_2,\ldots,r_K)$ gives

\begin{eqnarray*}
&& G(\mD^{(1)},\ldots,\mD^{(K)})-\sum_{r_1,\ldots,r_K} p^{(1)}_{r_1} \cdots p^{(K)}_{r_K} c^2_{r_1,\ldots,r_K}    \\
&\leq& -\varepsilon\displaystyle\sum_{r_2,\ldots,r_K}\frac{D_{a_2r_2}^{(2)}\cdots D_{a_Kr_K}^{(K)}}{4}(c_{a_1,a_2,\ldots,a_K}-c_{a_1',a_2,\ldots,a_K})^2\\
&\leq& -\frac{1}{4}\varepsilon\tau^{K-1}\delta_{\min},
\end{eqnarray*}
where the last line uses the fact that $\displaystyle\sum_{r_k}D_{a_kr_k}^{(k)}=p_{a_k}^{(k)}\geq \tau$. 
\end{proof}

\subsubsection{Proof of Theorem~\ref{thm:mcr}}
\begin{proof}
The notations we use here are inherited from Lemma~\ref{1}. By~Lemma \ref{1}, we obtain that
\begin{align}\label{inq:1}
   &\mathbb{P}\left(\mcr(\hat{\mM}_k, \mM_{k,\text{true}})\geq \varepsilon\right)\notag \\
\leq & \mathbb{P}\left(G(\mD^{(1)},\ldots,\mD^{(K)})-G(\mM_{1,\text{true}},\ldots,\mM_{K,\text{true}})\leq -\frac{1}{4}\varepsilon\tau^{K-1}\delta_{\min}\right).
\end{align}

Define $r=\displaystyle\sup_{\mathcal{J}_\tau} |F(\mM_1,\ldots,\mM_K)-G(\mD^{(1)},\ldots,\mD^{(K)})|$ as the stochastic deviation caused by the label assignment. When the event $G(\mD^{(1)},\ldots,\mD^{(K)})-G(\mM_{1,\text{true}},\ldots,\mM_{K,\text{true}})\leq -\frac{1}{4}\varepsilon\tau^{K-1}\delta_{\min}$ holds, by triangle inequality, we have

\begin{equation} \label{inq:2}
F(\hat{\mM}_1,\ldots,\hat{\mM}_K)-F(\mM_{1,\text{true}},\ldots,\mM_{K,\text{true}})\leq  2r-\frac{1}{4}\varepsilon\tau^{K-1}\delta_{\min}.
\end{equation}

Plugging the event \eqref{inq:2} back into inequality \eqref{inq:1}, we obtain
\begin{align}\label{inq:3}
    &\mathbb{P}\left(\mcr(\hat{\mM}_k, \mM_{k,\text{true}})\geq \varepsilon\right)\notag \\
    \leq & \mathbb{P}\left(F(\hat{\mM}_1,\ldots,\hat{\mM}_K)-F(\mM_{1,\text{true}},\ldots,\mM_{K,\text{true}})\leq 2r-\frac{1}{4}\varepsilon\tau^{K-1}\delta_{\min}\right)\notag \\
    \leq & \mathbb{P}\left(r \geq\frac{\varepsilon\tau^{K-1}\delta_{\min}}{8}\right),
\end{align}
where the last line uses the fact that the ${\hat{\mM}_k}$ is the global optimizer of $F(\cdot)$; i.e. $F(\hat{\mM}_1,\ldots,\hat {\mM}_K)=\arg\max F(\mM_1,\ldots,\mM_K)\geq F(\mM_{1,\text{true}},\ldots, \mM_{K,\text{true}})$.

Now we aim to find the probability~\eqref{inq:3} with respect to $r=\displaystyle\sup_{\mathcal{J}_\tau} |F(\mM_1,\ldots,\mM_K)-G(\mD^{(1)},\ldots,\mD^{(K)})|$. Note that $r$ involves the quadratic objective $f(x)=x^2$. The quadratic function $f(x)$ is locally lipschitz continuous with lipschitz constant $b = \sup_x|f'(x)|$, where $x$ is in the closure of the convex hull of the entries of $\tC$. Note that $b\leq 2\mnormSize{}{\tC}$. Therefore, for any partitions $\{\mM_k\}$ (which are not necessarily equal to $\{\hat \mM_k\}$ or $\{\mM_{k,\text{true}}\}$):
\begin{align}\label{inq:4}
&\left|F(\mM_1,\ldots,\mM_K)-G(\mD^{(1)}, \ldots,\mD^{(K)})\right|\notag \\
\leq&\sum_{r_1,\ldots,r_K}p_{r_1}^{(1)}p_{r_2}^{(2)}\cdots p_{r_K}^{(K)}\left|f(\hat c_{r_1,\ldots,r_K})-f(\mu_{r_1,\ldots,r_K})\right|\notag \\
\leq& \ 2\mnormSize{}{\tC}\mnormSize{}{\tR(\mM_1,\ldots,\mM_K)},
\end{align}
where 
\[
\hat c_{r_1,\ldots,r_K}={1\over \prod_k p^{(k)}_{r_k}}  (\tY\times_1 \mM^T_1\times_2\cdots\times_K\mM^T_K)_{r_1,\ldots,r_K},
\]
and
\[
\mu_{r_1,\ldots,r_K}={1\over \prod_k  p^{(k)}_{r_k}}\left[ \tC\times_1 \mD^{(1)^T} \times_2\cdots\times_K \mD^{(K)^T}\right]_{r_1,\ldots,r_K}.
\]
are, respectively, sample average and expected sample average, conditional on the partitions ${\mM_k}$, and $\tR( \mM_1,\ldots,\mM_K)$ is the residual tensor defined in~\eqref{eq:residual}. 

Combining~\eqref{inq:3},~\eqref{inq:4} and Hoeffding's inequality, we have
\begin{align}\label{eq:final}
\mathbb{P}\left(\mcr(\hat{\mM}_k, \mM_{k,\text{true}}) \geq \varepsilon\right)
&\leq \mathbb{P}\left(\sup_{\tJ_\tau}\mnormSize{}{\tR(\mM_1,\ldots,\mM_K)} \geq {\varepsilon \tau^{K-1}\delta_{\min} \over 16 \mnormSize{}{\tC}}\right)\notag\notag \\
&\leq \mathbb{P}\left(  \sup_{I\in\tI} {\left|\sum_{(i_1,\ldots,i_K)\in I}\left(Y_{i_1,\ldots,i_K}-\mathbb{E}(Y_{i_1,\ldots,i_K})\right)\right|\over |I|}\geq   {\varepsilon \tau^{K-1}\delta_{\min} \over 16 \mnormSize{}{\tC}} \right)\notag \\
&\leq 2^{1+\sum_k d_k}\text{exp}\left( -\frac{\varepsilon^2\tau^{2(K-1)}\delta_{\min}^2L}{512\sigma^2 \mnormSize{}{\tC}^2} \right),
\end{align}
where the last line uses the sub-Gaussianness of the entries in the residual tensor (conditional on $\{\mM_k\}$), and $L=\inf\{|I|: I\subset \tI\} \geq \tau^K\prod_{k=1}^Kd_k$ is introduced in Section~\ref{sec:notations}.
Defining $C=\frac{1}{512}$ in~\eqref{eq:final} yields the desired conclusion.
\end{proof}

\subsection{Sparse estimator}
\begin{lemma}\label{prop:sparse}
Consider the regularized least-square estimation,
\begin{equation}\label{eq:opt3}
\hat \Theta^{\text{sparse}}=\argmin_{\Theta\in \tP}\left\{\FnormSize{}{\tY-\Theta}^2+\lambda \normSize{}{\tC}_\rho
\right\},
\end{equation}
where $\tC=\entry{c_{r_1,\ldots,r_K}}\in\mathbb{R}^{R_1\times \cdots\times R_K}$ is the block-mean tensor, $\normSize{}{\tC}_\rho$ is the penalty function with $\rho$ being an index for the tensor norm, and $\lambda$ is the penalty tuning parameter. We have
\begin{equation}\label{eq:lasso2}
\hat c^{\text{sparse}}_{r_1,\ldots,r_K}=
\begin{cases}
\hat c^{\text{ols}}_{r_1,\ldots,r_K}\mathds{1}\left\{|\hat c^{\text{ols}}_{r_1,\ldots,r_K} |\geq {\sqrt{\lambda \over n_{r_1,\ldots,r_K}}}\right\} & \text{if}\ \rho=0,\\
\text{sign}(\hat c^{\text{ols}}_{r_1,\ldots,r_K})\left( |\hat c^{\text{ols}}_{r_1,\ldots,r_K}|-{\lambda \over 2n_{r_1,\ldots,r_K}}  \right)_{+} &\text{if}\ \rho=1,
\end{cases}
\end{equation}
where $a_{+}=\max(a,0)$ and $\hat c^{\text{ols}}_{r_1,\ldots,r_K}$ denotes the ordinary least-square estimate as in Algorithm~\ref{alg:B}. 
\end{lemma}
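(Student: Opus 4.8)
The plan is to exploit the fact that, once the membership matrices $\{\mM_k\}$ are held fixed (as they are in the core-update step of Algorithm~\ref{alg:B}), both the squared-error loss $\FnormSize{}{\tY-\Theta}^2$ and the penalty $\normSize{}{\tC}_\rho$ separate into a sum over the $\prod_k R_k$ blocks. Consequently the minimization over the core tensor $\tC$ decouples into $\prod_k R_k$ independent scalar problems, one for each block mean $c_{r_1,\ldots,r_K}$, and the lemma reduces to an elementary one-dimensional penalized least-squares computation that I carry out separately for $\rho=0$ and $\rho=1$.

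First I would expand $\FnormSize{}{\tY-\Theta}^2=\sum_{r_1,\ldots,r_K}\sum_{(i_1,\ldots,i_K)\text{ in block }(r_1,\ldots,r_K)}(y_{i_1,\ldots,i_K}-c_{r_1,\ldots,r_K})^2$ and apply the standard orthogonal decomposition within each block. Writing $\hat c^{\text{ols}}_{r_1,\ldots,r_K}$ for the block average from~\eqref{eq:ols} and $n_{r_1,\ldots,r_K}$ for the block size, the inner sum equals $\sum(y-\hat c^{\text{ols}}_{r_1,\ldots,r_K})^2+n_{r_1,\ldots,r_K}(\hat c^{\text{ols}}_{r_1,\ldots,r_K}-c_{r_1,\ldots,r_K})^2$, where the first term does not depend on $c_{r_1,\ldots,r_K}$. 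Hence, up to an additive constant, the block-$(r_1,\ldots,r_K)$ subproblem is $\min_c\{n_{r_1,\ldots,r_K}(\hat c^{\text{ols}}_{r_1,\ldots,r_K}-c)^2+\lambda\,p_\rho(c)\}$, with $p_0(c)=\mathds{1}\{c\neq 0\}$ and $p_1(c)=|c|$.

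For the Lasso case ($\rho=1$) I would solve $\min_c\{n(\hat c^{\text{ols}}-c)^2+\lambda|c|\}$ via the subgradient optimality condition $0\in 2n(c-\hat c^{\text{ols}})+\lambda\,\partial|c|$. A case analysis on $\text{sign}(c)$ then yields the soft-thresholding rule: a nonzero stationary point $c=\hat c^{\text{ols}}-\text{sign}(\hat c^{\text{ols}})\lambda/(2n)$ exists exactly when $|\hat c^{\text{ols}}|>\lambda/(2n)$, and otherwise $c=0$ is optimal, which is precisely the stated form. For the subset case ($\rho=0$) the penalty is nonconvex, so instead of a first-order condition I would compare the only two candidate minimizers directly: the objective at $c=0$ equals $n(\hat c^{\text{ols}})^2$, while over $c\neq 0$ it is minimized at $c=\hat c^{\text{ols}}$ with value $\lambda$. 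Selecting the smaller gives $c=\hat c^{\text{ols}}$ precisely when $n(\hat c^{\text{ols}})^2\geq\lambda$, i.e.\ $|\hat c^{\text{ols}}|\geq\sqrt{\lambda/n}$, and $c=0$ otherwise, which is the hard-thresholding rule in~\eqref{eq:lasso2}.

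I expect no serious obstacle here, since the core content is a routine scalar optimization. The only points needing care are justifying the block-wise separability (confirming that neither the loss nor either penalty couples distinct blocks once the memberships are fixed) and handling the nonconvex $\ell_0$ penalty by direct comparison of objective values rather than a stationarity argument, including the boundary case $|\hat c^{\text{ols}}|=\sqrt{\lambda/n}$, where both candidates attain the minimum and the stated convention breaks the tie in favor of the nonzero value.
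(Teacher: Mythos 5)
Your proposal is correct and follows essentially the same route as the paper's proof: both reduce the problem to $\prod_k R_k$ independent scalar subproblems via the block-wise (orthogonal-design) separability of the loss and penalty, then solve the $\ell_0$ case by direct comparison of the two candidate values and the $\ell_1$ case by the subgradient optimality condition. The only cosmetic difference is that the paper phrases the separability through a Kronecker-product design matrix $\mM_1\otimes\cdots\otimes\mM_K$ with orthogonal columns rather than expanding the sum of squares directly, and your explicit treatment of the boundary tie $|\hat c^{\text{ols}}|=\sqrt{\lambda/n}$ is if anything slightly more careful than the paper's.
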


\begin{proof}
We formulate the estimation of $\tC$ as a regularized least-square regression. Note that $\Theta\in\tP$ implies that
\begin{equation}\label{eq:optproof}
\Theta=\tC\times_1\mM_1\times \cdots \times_K \mM_K.
\end{equation}
Define $\mX=\mM_1\otimes \ldots \otimes \mM_K\in \mathbb{R}^{d\times R}$, where $d=\prod_k d_k$ and $R=\prod_k R_k$, and ${\bm \beta}=\text{vec}(\tC)\in\mathbb{R}^R$. Here $\mX$ is a membership matrix that indicates the block allocation among tensor entries. Specifically, $\mX$ consists of orthogonal columns with $\mX^T\mX=\text{diag}(n_1,\ldots,n_R)$, where $n_r$ is the number of entries in the tensor block that corresponds to the $r$-th column of $\mX$.

For a given set of $\mM_k's$, the optimization~\eqref{eq:optproof} with respect to $\tC$ is equivalent to a regularized linear regression with $\mY=\text{vec}(\tY)$ as the response and $\mX$ as the design matrix:
\begin{equation}\label{eq:opt}
L({\bm \beta})=\vnormSize{}{\mY-\mX \bm{\beta}}^2+\lambda\normSize{}{\bm{\beta}}_\rho.
\end{equation}
When $\lambda=0$ (no penalty), the minimizer is $\hat {\bm \beta}^{\text{ols}}=(\hat \beta^{\text{ols}}_1,\ldots,\hat \beta^{\text{ols}}_R)=(\mX^T\mX)^{-1}\mX^T\mY$, where $\hat \beta^{\text{ols}}_r={1\over n_r}\my_{\mr}\mathbf{1}^T_{n_r}$ for all $r\in[R]$.

{\bf Case 1: } $\rho=0$.\par
Note that $\mX$ induces a partition of indices $[d]$ into $R$ blocks. With a little abuse of notation, we use $\tR=\{i\in[d]:\mX(i)=r\}$ to denote the collection of tensor indices that belong to the $r$th block, and use $\mY_{\tR}\in\mathbb{R}^{n_r}$ to denote the corresponding tensor entries. By the orthogonality of $\mX$, we have
\begin{equation}
\begin{split}
L(\boldsymbol{\beta})&=\sum_{r=1}^R \vnormSize{}{\mY_{\tR}-\beta_r\mathbf{1}_{n_r}}^2+\lambda \sum_{r=1}^R \mathds{1}{\{ \beta_r\neq 0\}}\\
&=\sum_{r=1}^R\KeepStyleUnderBrace{\left( \vnormSize{}{\mY_{\tR}-\beta_r\mathbf{1}_{n_r}}^2+\lambda \mathds{1}{\{ \beta_r\neq 0\}}\right)}_{:=L_r(\beta_r)}
\end{split}
\end{equation}
The optimization can be separated into each of $\beta_r$'s. For any $r\in[R]$, the sub-optimization $\min_{\beta_r}L_r(\beta_r)$ has a closed-form solution
\[
\min_{\beta_r} L_r(\beta_r)
=\begin{cases}
\mY^T_{\tR}\mY_{\tR}-n_r\left(\hat \beta_r^{\text{ols}}\right)^2+\lambda& \text{if $\hat \beta^{\text{ols}}_r\neq 0$,}\\
\mY^T_{\tR}\mY_{\tR}&\text{if $\hat \beta^{\text{ols}}_r=0$,}
\end{cases}
\]
with
\begin{equation}\label{eq:sparse}
\arg\min_{\beta_r} L_r(\beta_r)=
\begin{cases}
0\quad& \text{if $n_r\left(\hat \beta_r^{\text{ols}}\right)^2 \leq \lambda$},\\
\hat \beta_r^{\text{ols}} &\text{otherwise}.
\end{cases}
\end{equation}
Solution~\eqref{eq:sparse} can be simplified as $\hat \beta^{\text{sparse}}_r=\hat \beta^{\text{ols}}_r\mathds{1}\{|\hat \beta^{\text{ols}}_r| \leq \sqrt{\lambda \over n_r}\}$. The proof is complete by noting that $\hat c^{\text{sparse}}_{r_1,\ldots,r_R}=\hat \beta^{\text{sparse}}_r$ and $n_{r_1,\ldots,r_K}=n_r$ for all $(r_1,\ldots,r_K)\in[R_1]\times \cdots \times [R_K]$.

{\bf Case 2: } $\rho=1$.\par

Similar as in Case 1, we write the optimization~\eqref{eq:opt} as
\[
L({\bm \beta})=\sum_{r=1}^R\KeepStyleUnderBrace{\left(\vnormSize{}{\mY_{\tR}-\beta_r\mathbf{1}_{n_r}}^2+\lambda |\beta_{r}|\right)}_{:=L_r(\beta_r)},
\]
where, with a little abuse of notation, we still use $L_r(\beta_r)$ to denote the sub-optimization. To solve $\argmin_{\beta_r}L_r(\beta_r)$, we use the properties of subderivative. Taking the subderivative with respect to $\beta_r$, we obtain
	
\begin{equation}
\frac{\partial L_r(\beta_r)}{\partial \beta_r} = 
\begin{cases}
2n_r\beta_r-2n_r\hat \beta^{\text{ols}}_r+\lambda &\mbox{if $\beta_r>0$,}\\
 [2n_r\beta_r-2\hat \beta^{\text{ols}}_r-\lambda, \ 2n_r\beta_r-\hat \beta^{\text{ols}}+\lambda]&\mbox{if $\beta_r=0$,}\\
2n_r\beta_r-2n_r\hat \beta^{\text{ols}}_r+\lambda &\mbox{if $\beta_r<0$.}\\
\end{cases}
\end{equation}
Because $\hat \beta^{\text{sparse}}_r$ minimizes $L_r(\beta_r)$ if and only if $0 \in \frac{\partial L_r(\beta_r)}{\partial \beta_j}$, we have:
\begin{equation}\label{eq:lasso3}
\hat \beta^{\text{sparse}}_r=
\begin{cases}
\hat \beta^{\text{ols}}_r+{\lambda\over 2n_r}&\mbox{if $\hat \beta^{\text{ols}}_r<-{\lambda \over 2n_r}$,}\\
0 &\mbox{if $\hat \beta^{\text{ols}}_r\in[-{\lambda \over 2n_r},{\lambda \over 2n_r}]$,}\\
\hat \beta^{\text{ols}}_r-{\lambda\over 2n_r}&\mbox{if $\hat \beta^{\text{ols}}_r>{\lambda \over 2n_r}$.}\\
\end{cases}
\end{equation}
The solution~\eqref{eq:lasso3} can be simplified as 
\[
\hat \beta^{\text{sparse}}_r=\text{sign}(\hat \beta^{\text{ols}}_r)\left(|\hat \beta^{\text{ols}}_r|-{\lambda\over 2n_r}\right)_{+},\quad \text{for all $r\in[R]$}.
\]
\qedhere
\end{proof}

\section{Supplementary Figures and Tables}

\begin{figure}[http!]
\begin{center}
\includegraphics[width=.38\textwidth]{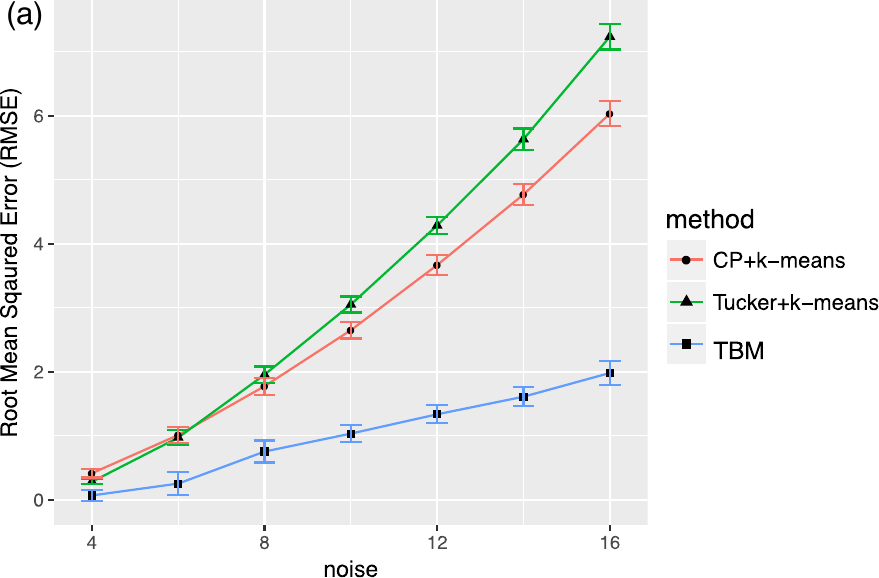}
\includegraphics[width=.55\textwidth]{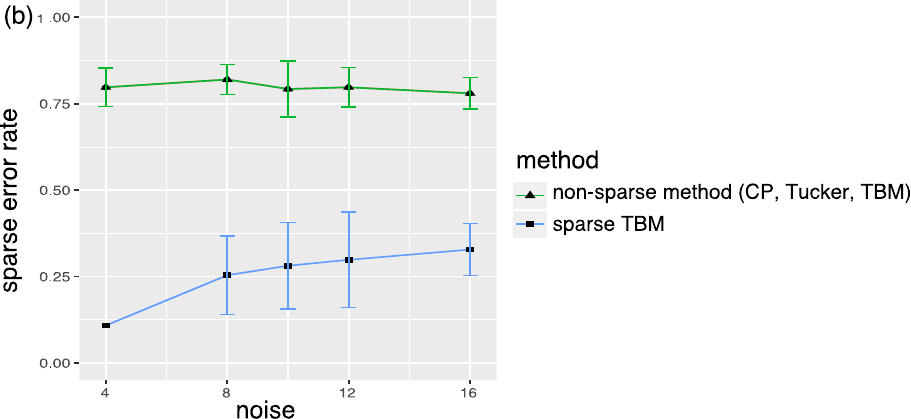}
\end{center}
\caption{(a) estimation error and (b) sparse error rate against noise for sparse tensors of dimension $(40,40,40)$ when $p=0.8$. }\label{fig:sparse}
\end{figure}

\begin{table}[http]

	\centering	

	\begin{tabular}{c|c|c|c}
		\hline
		Dimensions &True clustering sizes&Noise&Estimated clustering sizes\\ 
$(d_1,d_2,d_3)$&$(R_1,R_2,R_3)$&$(\sigma)$&$(\hat R_1,\hat R_2,\hat R_3)$\\
		\hline
		$(40,40,40)$&$(4,4,4)$&4&$({\bf 4},\ {\bf 4},\ {\bf 4})\pm (0,\ 0,\ 0)$\\
		$(40,40,40)$&$(4,4,4)$&8&$({\bf 3.94},\ {\bf 3.96},\ {\bf 3.96})\pm (0.03,\ 0.03,\ 0.03)$\\
		$(40,40,40)$&$(4,4,4)$&12&$(3.08,\ 3.12,\ 3.12)\pm (0.10,0.10,0.10)$\\
		\hline
		$(40,40,80)$&$(4,4,4)$&4&$({\bf 4},\ {\bf 4},\ {\bf 4})\pm (0,\ 0,\ 0)$\\
		$(40,40,80)$&$(4,4,4)$&8&$({\bf 4},\ {\bf 4},\ {\bf 4})\pm (0,\ 0,\ 0)$\\
		$(40,40,80)$&$(4,4,4)$&12&$({\bf 3.96},\ {\bf 3.96},\ 3.92)\pm (0.04,0.04,0.04)$\\
			\hline
		$(40,40,40)$&$(2,3,4)$&4&$({\bf 2},\ {\bf 3},\ {\bf 4})\pm (0,\ 0,\ 0)$\\
		$(40,40,40)$&$(2,3,4)$&8&$({\bf 2},{\bf \ 3},\ {\bf 3.96})\pm (0,\ 0,\ 0.03)$ \\
		$(40,40,40)$&$(2,3,4)$&12&$({\bf 2},\ {\bf 2.96},\ 3.60)\pm (0,\ 0.05,\ 0.09)$\\
	\end{tabular}
		\caption{The simulation results for estimating $\mR=(R_1,R_2,R_3)$. Bold number indicates no significant difference between the estimate and the ground truth, based on a $z$-test with a level $0.05$.}\label{tab:rank}
\end{table}

\begin{table}[H]
	\centering
\resizebox{\columnwidth}{!}{
	\begin{tabular}{c|c|c|c|c}
Tissues & Over-expressed genes  &Block-means&Under-expressed genes&Block-means \\
\hline
Cluster 1&GFAP, MBP&10.88&GPR6 , DLX5 , DLX6 , NKX2-1&-8.40\\
\hline
Cluster 2 &GFAP, MBP&5.98 &CDH9, RXFP1, CRH, ARX, CARTPT, DLX1,FEZF2  &-9.49\\
\hline
\multirow{2}{*}{Cluster 3 }&\multirow{2}{*}{GFAP, MBP} &\multirow{2}{*}{8.34}&AVPR1A, CCKAR, CHRNB4, CYP19A1, HOXA4 , LBX1, SLC6A3&-8.45\\
 &&& TBR1, SLC17A6, SLC30A3& -8.17\\
 \hline
\multirow{2}{*}{Cluster 4}&\multirow{2}{*}{ GFAP, MBP} &\multirow{2}{*}{8.83}&AVPR1A, CCKAR, CHRNB4, CYP19A1, HOXA4 , LBX1, SLC6A3&-8.40 \\
 &&&DAO   EN2   EOMES& -6.57\\
\end{tabular}
}
\caption{\small Top expression blocks from the multi-tissue gene expression analysis. The tissue clusters are described in Supplementary Section~\ref{sec:data}.}\label{tab:gene}
\end{table}

\begin{table}[H]
	\centering
\resizebox{\columnwidth}{!}{
	\begin{tabular}{c|c|c}
Countries & Countries  & Relation types  \\
\hline
Cluster 1&Clusters 4 and 5&reltreaties, booktranslations, relbooktranslations, relexports, exports3\\
\hline
Clusters 1 and 4& Cluster 5&relintergovorgs, relngo, intergovorgs3, ngoorgs3\\
\hline
Cluster 3&Clusters 1, 4, and 5&commonbloc0, blockpositionindex\\
\hline
Clusters 1 and 3&Clusters 4 and 5&\multirow{3}{*}{timesinceally, independence}\\
Cluster 1 & Cluster 3&\\
Cluster 4 & Cluster 5&\\
\hline
\multirow{3}{*}{Cluster 4}&\multirow{3}{*}{Cluster 5}&treaties, conferences, weightedunvote, unweightedunvote, intergovorgs, ngo,\\
& &officialvisits, exportbooks, relexportbooks, tourism,\\
&&reltourism, tourism3, exports, militaryalliance, commonbloc2
\end{tabular}
}
\caption{\small Top blocks from the \emph{Nations} data analysis. The countries clusters are described in Supplementary Section~\ref{sec:data}.}\label{tab:rel}
\end{table}

\section{Time complexity}
The total cost of our Algorithm~\ref{alg:B} is $\tO(d)$ per iteration, where $d=\prod_k d_k$ denotes the total number of tensor entries. The per-iteration computational cost scales linearly with the sample size, and this complexity is comparable to the classical tensor methods such as CP and Tucker decomposition. More specifically, each iteration of Algorithm~\ref{alg:B} consists of updating the core tensor $\tC$ and $K$ membership matrices $\mM_k$'s. The update of $\tC$ requires $\tO(d)$ operations and the update of $\mM_k$ requires $\tO(R_k{d\over d_k})$ operations. Therefore the total cost is $\tO(d+d\sum_k {R_k\over d_k})$. 

\section{Additional information for real data analysis}\label{sec:data}
{\bf Multi-tissue gene expression.} The gene expression data we analyzed is part of the GTEx v6 datasets (\url{https://www.gtexportal.org/home/datasets}). We cleaned and preprocessed the data following the steps in~\cite{wang2017three}. We focused on the 13 brain tissues, 193 individuals, and 362 annotated genes provided by Atlax of the Developing Human Brain (\url{http://www.brainspan.org/ish}). After applying the $\ell$-0 penalized TBM to the mean-centered data tensor, we identified the following four clusters of tissues:
\begin{enumerate}[label=-, leftmargin=*]
\item Cluster 1: Substantia nigra, Spinal cord (cervical c-1)
\item Cluster 2: Cerebellum, Cerebellar Hemisphere
\item Cluster 3: Caudate (basal ganglia), Nucleus accumbens (basal ganglia), Putamen (basal ganglia)
\item Cluster 4: Cortex, Hippocampus, Anterior cingulate cortex (BA24), Frontal Cortex (BA9), Hypothalamus, Amygdala
\end{enumerate}

We found that most tissue clusters are spatially restricted to specific brain regions, such as the two cerebellum tissues (cluster 2), three basal ganglia tissues (cluster 3), and the cortex tissues (cluster 4). Supplementary Table~\ref{tab:gene} reports the associated gene cluster for each tissue cluster. Because our method attaches importance to blocks by the absolute mean estimates, our method is able to detect both over- and under-expression patterns. Blocks with highly positive means correspond to over-expressed genes, whereas blocks with highly negative means correspond to under-expressed genes.

{\bf Nations dataset.} This is a $14 \times 14 \times 56$ binary tensor consisting of $56$ political relations of $14$ countries between 1950 and 1965~\cite{nickel2011three}. The tensor entry indicates the presence or absence of a political action, such as ``treaties'', ``sends tourists to'', between the nations. We applied the $\ell$-0 penalized TBM to the binary-valued data tensor, and we identified the following five clusters of countries:
\begin{enumerate}[label=-, leftmargin=*]
\item Cluster 1: Brazil, Egypt, India, Israel, Netherlands
\item Cluster 2: Burma, Indonesia, Jordan
\item Cluster 3: China, Cuba, Poland, USSA
\item Cluster 4: USA
\item Cluster 5: UK
\end{enumerate}
Supplementary Table~\ref{tab:rel} reports the cluster constitutions for top blocks. Because the tensor entries take value on either 0 or 1, the top blocks mostly have mean one.

\bibliographystyle{unsrt}
\bibliography{tensor_wang}

\end{document}